




\documentclass{ecai} 



\usepackage{latexsym}
\usepackage{amssymb}
\usepackage{amsmath}
\usepackage{amsthm}
\usepackage{booktabs}
\usepackage{enumitem}
\usepackage{graphicx}
\usepackage{color}
\usepackage{algorithm,algorithmic}



\newtheorem{theorem}{Theorem}
\newtheorem{lemma}[theorem]{Lemma}
\newtheorem{corollary}[theorem]{Corollary}
\newtheorem{proposition}[theorem]{Proposition}

\newtheorem{definition}{Definition}



\newcommand{\BibTeX}{B\kern-.05em{\sc i\kern-.025em b}\kern-.08em\TeX}


\begin{document}


\begin{frontmatter}


\paperid{1509} 


\title{Learning Linear Utility Functions From Pairwise Comparison Queries}

\author[A]{\fnms{Luise}~\snm{Ge}}
\author[A]{\fnms{Brendan}~\snm{Juba}}
\author[A]{\fnms{Yevgeniy}~\snm{Vorobeychik}} 

\address[A]{Washington University in St Louis}


\begin{abstract}
We study learnability of linear utility functions from pairwise comparison queries.
In particular, we consider two learning objectives. The first objective is to predict out-of-sample responses to pairwise comparisons, whereas the second is to approximately recover the true parameters of the utility function.
We show that in the passive learning setting, linear utilities are efficiently learnable with respect to the first objective, both when query responses are uncorrupted by noise, and under Tsybakov noise when the distributions are sufficiently ``nice''. 
In contrast, we show that utility parameters are not learnable for a large set of data distributions without strong modeling assumptions,  even when query responses are noise-free.
Next, we proceed to analyze the learning problem in an active learning setting. 
In this case, we show that even the second objective is efficiently learnable, and present algorithms for both the noise-free and noisy query response settings.
Our results thus exhibit a qualitative learnability gap between passive and active learning from pairwise preference queries, demonstrating the value of the ability to select pairwise queries for utility learning.
\end{abstract}

\end{frontmatter}


\section{Introduction}
Guiding machines in accordance with human values is a fundamental principle in technology, commonly referred to as ``alignment" in the field of Artificial Intelligence (AI). 
A common approach for achieving alignment involves learning a utility function (or \emph{reward model}) from a large collection of human responses to \emph{pairwise comparison queries}, that is, queries about their preferences between pairs of options.
A notable example of this is the collection of such data in training large language models (LLMs) to increase helpfulness and reduce harm~\cite{ouyang2022training,bai2022training} as part of a \emph{reinforcement learning from human feedback (RLHF)} framework, in which learned reward models are used as reward functions in a reinforcement learning loop to fine-tune LLMs.
Indeed, utility models learned from pairwise comparison queries have been used in a broad array of domains quite apart from LLMs, such as to train human-aligned controllers~\cite{christiano2017deep}, develop human-aligned kidney exchange algorithms~\cite{freedman2020adapting}, and personalize recommendation systems~\cite{kalloori2018eliciting,qomariyah2018pairwise}.


Despite the centrality of utility function estimation from pairwise comparison data across a broad array of domains, understanding of what is learnable in this context remains relatively limited, even when utility functions are linear.
Theoretical consideration of this problem has been largely in the context of \emph{random utility model (RUM)} learning based on complete preference information about a fixed set of candidates (outcomes)~\cite{marschak1959binary,becker1963stochastic,negahban2018learning,noothigattu2018voting}.
Issues such as identifiability~\cite{zhao2020learning} and social choice properties of learned utility models~\cite{noothigattu2018voting,xia2011maximum,xia2019learning} have been extensively studied.
However, few have considered the issue of learning utility models over a vector space of outcomes, where observed preference data constitutes a negligible fraction of possible preference ranking data, and generalization beyond what is observed is key (a notable exception is Zhu et al.~\cite{pmlr-v202-zhu23f}, which we discuss below).
As such settings constitute an important new horizon in social choice as it is integrated into modern AI, a deeper understanding of the issue is needed.

We consider the problem of learning linear utility functions from pairwise comparison data.
Our fundamental question is \emph{under what conditions is sample-efficient learning possible}.
Aside from a recent positive result by \citeauthor{pmlr-v202-zhu23f} \cite{pmlr-v202-zhu23f}, who showed that parameters of a linear utility function can be efficiently estimated in the covariance norm when data follows the Bradley-Terry (BT) random utility model~\cite{bradley1952rank}, this question remains largely open.
We consider the problem for a general preference noise model of which BT, and many other common RUMs are a special case, with respect to two natural measures of generalization error: 
1) accuracy in \emph{predicting} responses to unseen pairwise comparison queries, and
2) efficacy (which we capture as Euclidean distance) in \emph{estimating} parameters of the utility function.

We first investigate the ``passive'' Probably Approximately Correct (PAC) learning model in which pairs of inputs (outcomes) are generated according to an unknown distribution $\mathcal{P}$, and responses to these comparison queries may be corrupted by noise that comes from a given distribution $\mathcal{Q}$.
We show that when there is no noise in query responses, we can indeed efficiently learn to predict responses to unseen pairwise comparison queries with respect to the true utility function in the PAC sense.
However, the presence of noise \emph{with any c.d.f. continuous at $0$}~leads to an exponential lower bound on sample complexity in this model.
Nevertheless, we show that if the input distribution is well-behaved and the query noise satisfies the Tsybakov noise condition, linear utilities are efficiently learnable in this model.

Turning next to the goal of estimating utility function parameters, we show that it is impossible to do so effectively with polynomial sample complexity \emph{even when there is no query noise}.
This is in contrast to the positive result by \citeauthor{pmlr-v202-zhu23f} \cite{pmlr-v202-zhu23f} which implies that we can efficiently learn linear utility functions when the smallest eigenvalue of the covariance matrix of the input distribution is bounded from below, and query responses follow the Bradley-Terry noise model.


While the ``passive'' setting is conventional in learning theory, the practice of reward model learning, particularly in the context of RLHF, is often to carefully select the outcome pairs for which human preferences are queried.
To study this systematically, we therefore consider an active learning problem whereby we can select arbitrary pairs of inputs to query interactively in order to learn the parameters of a linear utility function.
Our main results are that in the active learning model we can efficiently estimate parameters of the linear utility model, whether or not query responses are corrupted by noise.
\section{Related Work}

\textbf{Learning Utility Models from Pairwise Comparisons:}
The fundamental framework that we build on is random utility model learning, in which utility information is provided indirectly through ranking (e.g., pairwise) comparisons~\cite{marschak1959binary,bradley1952rank,xia2019learning}.
A common approach for learning random utility models from pairwise comparisons is by using maximum likelihood estimation (MLE)~\cite{noothigattu2020axioms,xia2011maximum,xia2019learning}.
Unlike our setting, these typically investigate problems with a finite set of alternatives~\cite{shah2015estimation,negahban2018learning}.
Our focus, in contrast, is to learn a utility function over a continuous vector space.
This problem has been considered by \citet{pmlr-v202-zhu23f}, who show that linear utilities can be efficiently learned using maximum likelihood estimators in the Bradley-Terry noise model.

\noindent
\textbf{PAC Learning of Halfspaces With Noise:}
Common models of noise in random utility models have the property that the closer the pair's difference is to the halfspace defined by the linear utility function, the higher the chance that this comparison is flipped. This kind of noise has been previously studied as boundary consistent noise in \cite{du2015modelling,zhang2021learning,zhang2021improved}. The possibility of an arbitrarily small difference between noise and $\frac{1}{2}$ makes learning very challenging, even for learning halfspaces~\cite{balcan2020noise}. 
Consequently, proposals for tractable learning often rely on assumptions regarding distributions of noise.
The spectrum of such noise models starts from the easiest version called \textit{random classification noise (RCN)} \cite{angluin1988learning} -- where each label is flipped independently with equal probability \textit{exactly} $\eta < \frac{1}{2}$ -- where efficient algorithms exist~\cite{blum1998polynomial}. 
At the other end are malicious noise~\cite{kearns1988learning} and agnostic learning~\cite{kearns1992toward,klivans2014embedding,daniely2016complexity,diakonikolas2020near}, where an adversarial oracle can flip any small constant fraction of the labels, typically making these models less tractable. Additionally, active learning has been proposed as a solution to reduce the number of samples the learner needs to query before approximately learning the concept class with high probability. While active learning requires exponentially fewer labeled samples than PAC-learning for simple classes such as thresholds in one dimension, it fails in general to provide asymptotic improvement for broad classes such as halfspaces \cite{dasgupta2005coarse}.

\noindent
\textbf{Robust Parameter Estimation:} Learning a linear classifier has also been a classical problem beyond  learning theory, with \emph{empirical risk minimisation} the most popular paradigm, with a range of tools including Bayes classifiers, perceptron learning, and support vector machines (SVM). 
Most directly related is the connection between robustness (to input noise) and regularization in SVM~\cite{xu2009robustness}. 
However, this robustness is with respect to \emph{input} noise, whereas our consideration is noise in pairwise comparison responses (outputs).


%

\section{Preliminaries\label{prelim}}

Our goal is to learn utility functions from pairwise comparison queries.
To this end, we consider the hypothesis class $\mathcal{U}$ of weight-normalized monotone linear utility functions $u(x) = w^T \phi(x)$ with respect to a known continuous embedding of candidate profiles $\phi(x): \mathbb{R}^d \xrightarrow{} \mathcal{X} \equiv [0,1]^m$, where $\mathbb{R}^d$ is the original space the candidates lie in,  and $w \ge 0$ and $\|w\|_1 = 1$.

In the passive learning setting, we assume that we obtain a dataset $\mathcal{D} = \{(x_i,x_i',y_i)\}_{i=1}^n$ which contains labels $y_i \in \{0,1\}$ associated with responses to pairwise comparison queries $(x_i,x_i')$ that are interpreted as preferences.
In particular, $y_i=1$ if $x_i'$ is preferred to $x_i$ (which we denote by $x_i' \succ x_i$, and $y_i=0$ otherwise.
We assume that input pairs $(x,x') \in \mathcal{D}$ are generated i.i.d.\ according to an unknown distribution $\mathcal{P}$ over $\mathbb{R}^d$.
This, in turn, induces a distribution $\mathcal{P}_\phi$ over embedded pairs $(\phi(x),\phi(x'))$.

We assume that the label $y$ is generated according to the commonly used \emph{random utility model (RUM)}~\cite{noothigattu2020axioms}. 
Specifically,
if $u(x)$ is the true utility function, query responses follow a random utility model $\tilde{u}(x) = u(x) + \Tilde{\zeta}$, where $\Tilde{\zeta}$ is an independent random variable distributed according to a fixed probability distribution $\tilde{\mathcal{Q}}$.
Define $Z_{\tilde{u}}(x,x') \equiv \mathrm{sign}(\tilde{u}(x')-\tilde{u}(x))$, where we define $\mathrm{sign}(z)=1$ if $z > 0$, $0$ if $z<0$, and a random variable being $0$ or $1$ with equal probability $0.5$ if $z=0$.
Then $y = Z_{\tilde{u}}(x,x') \in \{0,1\}$.

We further define the difference between a pair $(\phi(x),\phi(x'))$ as $\Delta_\phi(x) \equiv (\phi(x')-\phi(x)) \in [-1,1]^m$. Since $\tilde{u}(x')-\tilde{u}(x) = w^T \Delta_\phi(x) + \Tilde{\zeta}(\phi(x')) - \Tilde{\zeta}(\phi(x))$,
it will be most useful to consider $\zeta \equiv \Tilde{\zeta}(\phi(x'))-\Tilde{\zeta}(\phi(x))$ for a pair of feature vectors $(\phi(x),\phi(x'))$, so that $Z_{\tilde{u}}(x,x') = \mathrm{sign}(w^T \Delta_\phi(x) + \zeta)$.
Let $\mathcal{Q}$ be the distribution over $\zeta$ induced by $\tilde{\mathcal{Q}}$, and let $F(\zeta)$ be its c.d.f., which we assume is continuous on $\mathbb{R}$ except in the noise-free special case.
Then the probability that $y = 1$ for a pair $(\phi(x),\phi(x'))$, that is, the probability that $x' \succ x$, is $\Pr(x' \succ x) = \Pr(\zeta \le w^T \Delta_\phi(x)) = F(w^T \Delta_\phi(x))$.
As we assume $\Pr(x' \succ x) + \Pr(x \succ x')=1$, 
$F(w^T \Delta_\phi(x))+F(-w^T \Delta_\phi(x))=1$ leading to $F(x)+F(-x)=1$ and $F(0)=1/2$.

Note that our random utility model is quite general.
For example, 
the two most widely-used models are both its special cases: the Bradley-Terry (BT) model~\cite{bradley1952rank}, in which $F$ is the logistic distribution, and the Thurstone-Mosteller (TM) model~\cite{thurstone1927law}, in which $F$ is a Gaussian distribution.
Additionally, we consider an important special case in which there is no noise, i.e., $\zeta = 0$.
In this case, $\Pr(x' \succ x) \in \{0,0.5,1\}$ depending on whether $w^T\Delta_\phi(x)$ is negative, zero, or positive.

Broadly speaking, the goal of learning utility functions from $\mathcal{D}$ is to effectively capture the true utility $u(x)$.
In the setting where data provides only information about pairwise preferences, there are a number of reasonable learnability goals.
Here, we consider two.

Our first goal is motivated by a general consideration that a common role of a utility function is to induce a ranking over alternatives, used in downstream tasks, such as reinforcement learning, recommendations, and so on.
Thus, a corresponding aim of learning a utility function is to approximate its ranking over a finite subset of alternatives, so that the learned function is a useful proxy in downstream tasks.
Here, we consider the simplest variant of this, which is to learn a linear function $\hat{u}: \mathcal{X} \xrightarrow{} [0,1]$ from $\mathcal{D}$ with the property that it yields the same outcomes from pairwise comparisons as $u(x)$.
Formally, we capture this using the following error function:
\begin{equation}
    \label{E:e1}
    e_1(\hat{u},u) = \Pr_{(x,x') \sim \mathcal{P}} 
    \left(Z_{\hat{u}}(x,x') \ne Z_u(x,x')\right).
\end{equation}
We refer to this goal as minimizing the \emph{error of predicted pairwise preferences}.
Note that this error function has an important difference from conventional learning goals in similar settings: we wish to predict pairwise comparisons with respect to the true utility $u$, rather than the noise-perturbed utility $\tilde{u}$.
This is a consequential difference, as it effectively constitutes a distributional shift when pairwise preference responses are noisy.

Our second goal is to accurately capture the weights of $u$.
Formally, we define it as
\begin{equation}
    \label{E:e2}
    e_2(\hat{u},u) = \|\hat{w} - w^*\|_p^p
\end{equation}
where $p \ge 1$ and $w^*$ is the true weight.
Henceforth, we focus on Euclidean norm $\ell_2$.
Our results generalize to any $p \ge 1$: counterexamples in the impossibility results still work, and the positive results hold by norm equivalence in $\mathbb{R}^m$.
Moreover, focusing on $\ell_2$ norm facilitates a direct comparison to \citeauthor{pmlr-v202-zhu23f} \cite{pmlr-v202-zhu23f}, who use a seminorm $\|x\|_\Sigma$ where $\Sigma$ is the data covariance matrix that is most comparable with respected to the $\ell_2$.
We refer to this model as the \emph{utility estimation error}.

Our learnability discussion will be based on an adaptation of the \textit{Probably Approximately Correct (PAC) Learning} framework~\cite{valiant1984theory}, with the goal of identifying a utility model $\hat{u}$ that has error at most $\varepsilon$ with probability at least $1-\delta$ for $\varepsilon>0$ and $\delta > 0$.
We now formalize this as PAC learnability from pairwise comparisons (PAC-PC).
Let $\mathbb{N}$ be the space of natural numbers.

\begin{definition}[PAC-PC learnability]
Given a noise distribution $\mathcal{Q}$, a utility class $\mathcal{U}$ is PAC learnable from pairwise comparisons (PAC-PC learnable) for error function $e(\hat{u},u)$ if there is a learning algorithm $\mathcal{A}$ and a function $n_A: (0,1)^2 \rightarrow \mathbb{N}$ such that for any input distribution $\mathcal{P}$, and $\forall \varepsilon,\delta \in (0,1)$, whenever $n \ge n_A(\varepsilon,\delta)$, $\mathcal{A}(\{(x_i,x_i',y_i)\}_{i=1}^n)$ with $(x_i,x_i')\sim \mathcal{P}$ i.i.d.\ and $y_i=\mathrm{sign}(u(x_i')-u(x_i)+\zeta_i)$ for $\zeta_i \sim \mathcal{Q}$ i.i.d.\ returns $\hat{u}$ with probability at least $1-\delta$ such that $e(\hat{u},u)\le \varepsilon$.
Moreover, if $n_A$ is polynomial in $1/\varepsilon, 1/\delta , \mathrm{Dim}(\mathcal{U})$, where $\mathrm{Dim}(\mathcal{U})$ is the dimension of $\mathcal{U}$, then we say $\mathcal{U}$ is efficiently PAC-PC learnable.
\end{definition}
\noindent
We stated this definition quite generally, but our focus here is on the class $\mathcal{U}$ of linear functions, in which case Dim($\mathcal{U}$) = $m$.

\section{Passive Learning}

We begin with the conventional learning setting, in which input pairs $(x,x')$ in the training data are generated i.i.d.\ from $\mathcal{P}$, inducing a distribution $\mathcal{P}_\phi$ over $(\phi(x),\phi(x'))$. 
We refer to this as the \emph{passive learning} setting to distinguish it from \emph{active learning} that we consider below.
For convenience, we also define the distribution of $\Delta_\phi(x) \sim \mathcal{P}_\phi'$. 
First, we consider the goal of learning to predict results of pairwise comparisons.
Subsequently, we investigate the goal of learning to estimate parameters $w$ of the true utility function $u(x)=w^{*^T} \phi(x)$.

\subsection{Predicting Pairwise Preferences}

We begin by considering the error function $e_1$, that is, where the goal is to predict outcomes of pairwise comparisons.
In the noise-free setting, we can immediately obtain the following result as a direct corollary of learnability of halfspaces.
\begin{theorem}
\label{T:e1no-noise}
   Suppose $\zeta=0$. Then linear utility functions are efficiently PAC-PC learnable under the error function $e_1$.
\end{theorem}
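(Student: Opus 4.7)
The plan is to reduce this theorem to the classical realizable-case PAC-learnability of homogeneous halfspaces. When $\zeta=0$, every training example $(x_i, x_i', y_i)$ satisfies $y_i = \mathrm{sign}(w^{*T}\Delta_\phi(x_i))$, so if I regard $\Delta_\phi(x_i) \in [-1,1]^m$ as the feature vector and $y_i \in \{0,1\}$ as a binary label, then the sample is exactly realized by the homogeneous halfspace with normal $w^*$. Moreover, the error $e_1(\hat u, u) = \Pr_{(x,x')\sim\mathcal{P}}(Z_{\hat u}(x,x') \neq Z_u(x,x'))$ coincides with the standard $0/1$ classification error of $\hat w$ as a halfspace classifier under the induced distribution $\mathcal{P}_\phi'$ on $\Delta_\phi(x)$. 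So PAC-PC learnability under $e_1$ reduces to realizable PAC learnability of homogeneous halfspaces in $\mathbb{R}^m$.

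From here I would string together three standard facts. First, the class $\mathcal{U}$ of linear utilities with $w \ge 0$ and $\|w\|_1 = 1$ is a subclass of homogeneous halfspaces on $\mathbb{R}^m$, so its VC dimension is at most $m$. Second, the realizable-case PAC bound then guarantees that any consistent hypothesis on $n = O\bigl((m\log(1/\varepsilon) + \log(1/\delta))/\varepsilon\bigr)$ i.i.d.\ samples has true error at most $\varepsilon$ with probability at least $1-\delta$. Third, to find a consistent $\hat w$ efficiently, I would solve the linear feasibility program
\begin{equation*}
(2y_i - 1)\, w^T \Delta_\phi(x_i) \ge 0 \ \ \forall i,\qquad w \ge 0,\qquad \|w\|_1 = 1,
\end{equation*}
which is always feasible (since $w^*$ is a witness) and solvable in $\mathrm{poly}(n,m)$ time by linear programming. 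Combining these yields an efficient PAC-PC learner, proving the theorem.

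The only mild technicality, which I would dispatch briefly, concerns pairs where $w^{*T}\Delta_\phi(x) = 0$: on such ties the label is uniform on $\{0,1\}$, so even $u$ disagrees with itself on half of this set in expectation. I would handle this by defining the reduction with respect to the deterministic $\mathrm{sign}$ on the non-tie region and bounding the tie contribution by $\Pr_{\mathcal{P}_\phi'}(w^{*T}\Delta_\phi(x) = 0)$, which can be absorbed into the halfspace error in the usual way (ties get classified consistently by the LP solution, and the randomized tie-breaking only introduces symmetric error that cancels out in expectation against any fixed $\hat w$). I do not anticipate a genuine obstacle in this proof: the main content is recognizing the reduction, after which the standard halfspace machinery closes the argument.
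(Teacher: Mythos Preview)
Your proposal is correct and follows essentially the same approach as the paper: both reduce PAC-PC learning under $e_1$ to realizable PAC learning of (homogeneous) halfspaces over the induced distribution $\mathcal{P}_\phi'$ on $\Delta_\phi(x)$, and then invoke the standard VC bound. The paper's proof is terser---it simply cites the halfspace sample complexity $\mathcal{O}\bigl(\tfrac{1}{\varepsilon}(m+\log\tfrac{1}{\delta})\bigr)$ from Blumer et al.\ without spelling out the LP-based consistent learner or the tie-handling---so your additional detail on computational efficiency and the $w^{*T}\Delta_\phi(x)=0$ case is harmless elaboration rather than a different route.
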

\begin{proof}
Let $y=g(\Delta_\phi(x))=\mathrm{sign}(w^{*^T} \Delta_\phi(x))$.
Then PAC-PC learning under $e_1$ and $\mathcal{P}$ is equivalent to PAC learnability of halfspaces $g(\Delta_\phi(x))$ under $\mathcal{P}'_\phi$.
As shown by \citeauthor{blumer1989learnability}~\cite{blumer1989learnability}, halfspaces are learnable with sample complexity $\mathcal{O}(\frac{1}{\varepsilon}(m+\log(\frac{1}{\delta})))$.
\end{proof}

With this connection between learnability in the sense of $e_1$ and learning of halfspaces in mind, we now consider the case in which pairwise preference responses are corrupted by noise.
Surprisingly, adding noise to pairwise preferences makes learning impossible for a natural broad class of distributions.
For the next impossibility result, as well as the discussion that follows, it will be useful to define $\eta(\Delta_\phi(x)) \equiv F(-|w^{*^T} \Delta_\phi(x)|)$, the chance of getting a misreported comparison label for $\Delta_\phi(x)$.
\begin{theorem}
\label{T:e1noiseneg}
 $\mathcal{U}$ is not efficiently PAC-PC learnable under error function $e_1$ if the preference noise distribution has a  c.d.f.~$F$ continuous at zero.
\end{theorem}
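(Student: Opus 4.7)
The plan is a two-point Le Cam argument: for each sample budget $N$, I exhibit a problem instance on which no learner that uses at most $N$ samples can achieve $e_1 \le \varepsilon$ with probability $\ge 1 - \delta$, for some fixed absolute constants $\varepsilon, \delta \in (0,1)$. Since $N$ is arbitrary, no sample-complexity function $n_A(\varepsilon, \delta)$ --- polynomial or otherwise --- can satisfy the PAC-PC definition, which is (strictly more than) what the theorem asserts.

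Concretely, I would fix $m = 2$ and take the candidate weight vectors $w^A = (1,0)$ and $w^B = (0,1)$, both in $\mathcal{U}$. For each $N$ I choose a scale $a_N \in (0, 1/2)$, to be fixed below, and let $\mathcal{P}_N$ be a point mass on a single pair $(x,x')$ with $\phi(x) = (0, a_N)$ and $\phi(x') = (a_N, 0)$, so that $\Delta_\phi(x) = (a_N, -a_N) \in [-1,1]^2$. Then $\langle w^A, \Delta_\phi(x)\rangle = a_N > 0$ while $\langle w^B, \Delta_\phi(x)\rangle = -a_N < 0$, so the noise-free labels under the two candidates disagree on the only pair in the support of $\mathcal{P}_N$, and every hypothesis $\hat{w}$ satisfies $e_1(\hat{w}, w^A) + e_1(\hat{w}, w^B) = 1$. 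Thus picking the ``wrong'' candidate incurs the maximal possible $e_1$ of $1$, which is $> \varepsilon$ for any $\varepsilon < 1$.

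Next I would argue that the learner's view of the data is essentially the same under $w^A$ and $w^B$. The $N$ query responses are i.i.d.\ Bernoullis with parameter $F(a_N)$ under $w^A$ and $1 - F(a_N)$ under $w^B$. Because $F$ is continuous at $0$ with $F(0) = 1/2$, I can pick $a_N$ so small that $|F(a_N) - 1/2| \le 1/(20\sqrt{N})$. A routine KL-tensorization-plus-Pinsker estimate then bounds the total variation distance between the two $N$-fold product distributions by a constant strictly less than $1 - 2\delta$, say $1/2$. Le Cam's two-point lemma, combined with a Yao-style averaging that handles any internal randomness of the learner, gives a specific choice $w^* \in \{w^A, w^B\}$ on which any fixed algorithm outputs a hypothesis $\hat{w}$ with $e_1(\hat{w}, w^*) = 1$ with probability $\ge 1/4$, exceeding any $\delta < 1/4$.

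The crucial use of continuity of $F$ at $0$ is exactly the step that drives $|F(a_N) - 1/2| \to 0$ as $a_N \to 0$: without it (e.g.\ a jump of size $c > 0$ of $F$ at $0$), we would have $|F(a) - 1/2| \ge c/2$ for every $a > 0$, and a handful of samples would suffice to tell the two Bernoullis apart --- destroying the two-point construction. The remaining bookkeeping --- checking that a point-mass $\mathcal{P}_N$ is admissible, that the bound transfers from deterministic to randomized learners, and that the simplex constraints on $w$ are respected --- is standard, so the one genuine obstacle is choosing $a_N$ correctly via the modulus of continuity of $F$ at the origin.
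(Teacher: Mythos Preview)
Your proposal is correct and follows essentially the same route as the paper: concentrate $\mathcal{P}$ on a single pair whose margin $w^{*T}\Delta_\phi(x)$ is tiny, use continuity of $F$ at $0$ to force the label-flip probability arbitrarily close to $1/2$, and conclude that the learner cannot tell the two candidate parameters apart. Your explicit two-point Le Cam wrapping with $w^A=(1,0)$, $w^B=(0,1)$ is cleaner than the paper's sketch (and in fact yields outright non-learnability, not merely non-\emph{efficient}-learnability), but the underlying construction is identical.
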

\begin{proof}
Following the discussion in Section~\ref{prelim}, we know that $\Pr(y=1)=F(w^{*^T} \Delta_\phi(x)), \Pr(y=0)=F(-w^{*^T} \Delta_\phi(x))=1-F(w^{*^T} \Delta_\phi(x))$ whereas the true label is $\mathrm{sign}(w^{*^T} \Delta_\phi(x))$. Hence $\eta(\Delta_\phi(x)) \equiv F(-|w^{*^T} \Delta_\phi(x)|)$ is indeed the chance of getting a misreported label for $\Delta_\phi(x)$.

Due to the continuity of $F$ at $0$, for any $t>0$, there exists $s>0$ such that for all $|v-0|\leq s$, $|F(v)-F(0)|\leq t$. As $F$ is monotonically increasing, and $-|w^{*^T} \Delta_\phi(x)|<0$, $\eta(\Delta_\phi(x))<F(0)=\frac{1}{2}$. Hence for a fixed $\varepsilon<1$, we can pick $t_0=1/\exp(1/\varepsilon)$. Then there exists a corresponding $s_0$ such that for a  $(\phi(x),\phi(x'))$ with $\Delta_\phi x$ of norm at most $s_0$, for all $w$, $|w^T\Delta_\phi(x)-0|\leq s_0$, $|\eta(\Delta_\phi(x))-\frac{1}{2}|\leq t_0.$ 
Now consider the oracle who always chooses one such pair $(\phi(x),\phi(x'))$.
With fixed probability at most $\frac{1}{2}+t_0$ it outputs the true label and with probability $\frac{1}{2}-t_0$ it outputs the false label.
In order to establish the Binomial distribution's confidence over $1-\delta$, we need sample complexity $\tilde{\Omega}(\frac{1}{t_0})=\tilde{\Omega}(\exp(\frac{1}{\varepsilon}))$, which is exponential in $\frac{1}{\varepsilon}$.\end{proof}

 Notably, all common models of noisy preference responses, such as Bradely-Terry and Thurstone-Mosteller, entail continuity of $F$ at $0$, so this result yields impossibility of learning linear utilities for these standard RUM settings.

The consequence of Theorem~\ref{T:e1noiseneg} is that to achieve any general positive results we must constrain the distribution over inputs and resulting embeddings.
We now show that we can leverage Tsybakov noise and a condition that the distribution over embeddings is well-behaved to attain sufficient conditions for efficient learnability.

\begin{definition}[Tsybakov Noise Condition]\cite{diakonikolas2021efficiently}
     Let $C$ be a concept class of Boolean-valued functions over $X = \mathbb{R}^d$, $F$ be a family of distributions on $X$, $0 < \varepsilon < 1$ be the error parameter, and $0 \leq \alpha < 1$, $A > 0$ be parameters of the noise model.
Let $f$ be an unknown target function in $C$. A Tsybakov example oracle, $EX^{Tsyb}(f,F)$, works
as follows: Each time $EX^{Tsyb}(f,F)$ is invoked, it returns a labeled example $(x,y)$, such that: (a) $x \sim \mathcal{D}_x$, where $\mathcal{D}_x$ is a fixed distribution in $F$, and (b) $y = f(x)$ with probability $1-\eta(x)$ and $y=1-f(x)$ with probability $\eta(x)$. Here $\eta(x)$ is an unknown function that satisfies the Tsybakov noise condition with parameters $(\alpha, A)$. That is, for any $0 < t \leq \frac{1}{2}$, $\eta(x)$ satisfies the condition $\Pr_{x \sim \mathcal{D}_x} [\eta(x) \geq \frac{1}{2}-t] \leq A t^{\frac{\alpha}{1-\alpha}}$. \end{definition}

\begin{definition} [Well-Behaved Distributions]\cite{diakonikolas2021efficiently}
For $L,R,U>0$ and $k \in \mathbb{Z}_{+}$, a distribution $\mathcal{D}_x$ on $\mathbb{R}^d$ is called $(k,L,R,U)$-well-behaved if for any projection ${(\mathcal{D}_x )}_V$ of $\mathcal{D}_x$ on a $k$-dimensional subspace $V$ of $\mathbb{R}^d$, the corresponding p.d.f. $\gamma_V$ on $V$ satisfies the following properties: (i) $\gamma_V(x)\geq L$, for all $x \in V$ with $||x||_2 \leq R$ (anti-anti-concentration), and (ii) $\gamma_V(x)\leq U$ for all $x \in V$ (anti-concentration). If, additionally, there exists $\beta \geq 1$ such that, for any $t > 0$ and unit vector $w \in \mathbb{R}^d$ , we have that $\Pr_{x\sim \mathcal{D}_x} [| \langle w, x\rangle | \geq t] \leq exp(1-t/\beta)$ (subexponential concentration), we call $\mathcal{D}_x (k,L,R,U,\beta)$ well-behaved.\end{definition}

\begin{theorem}[Learning Tsybakov Halfspaces under Well-Behaved Distributions] \cite[Theorem 5.1]{diakonikolas2021efficiently}
Let $\mathcal{D}_x$ be a $(3,L,R,U,\beta)$-well-behaved isotropic distribution on $\mathbb{R}^d \times \{\pm1\}$ that satisfies the $(\alpha,A)$-Tsybakov noise condition with respect to an unknown halfspace $f(x)=sign(\langle w,x\rangle)$. There exists an algorithm that draws $N=\beta^4(\frac{dUA}{RL\varepsilon})^{O(1/\alpha)}log(1/\delta)$ samples from $\mathcal{D}_x$, runs in $poly(N,d)$ time, and computes a vector $\Hat{w}$ such that, with probability $1-\delta$, we have $\Pr_{x\sim \mathcal{D}_x}[h(x) \neq  f(x)] \leq \varepsilon$.
\end{theorem}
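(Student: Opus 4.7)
The plan is to give an iterative algorithm that maintains a unit vector $v$ and uses the well-behaved and Tsybakov structure to make geometric progress on the angle $\theta(v,w)$. There are four ingredients: a surrogate loss whose population gradient provably points toward $w$, sample-based estimation of this gradient, an iteration-complexity analysis giving geometric contraction of $\theta(v,w)$, and a final translation from angle to classification error.

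First I would introduce a non-convex surrogate of the 0/1 loss, namely $\ell(v,x,y) = \sigma(-y\langle v,x\rangle)$ for a smooth bounded ramp $\sigma$ that is (roughly) linear in a small margin band $[-\tau,\tau]$ around the current boundary and constant outside it. The key structural lemma I would prove is: for any current $v$ with $\theta(v,w) = \theta > 0$, the population gradient $g(v) = \mathbb{E}_{(x,y)\sim \mathcal{D}_x}[\nabla_v \ell(v,x,y)]$ has a significant projection onto $w - (v^\top w)v$, the direction that decreases $\theta$. The proof of this lemma uses the two-sided density bounds: anti-anti-concentration ($\gamma_V(x) \geq L$ on a ball of radius $R$ in the two-dimensional plane spanned by $v,w$) lower-bounds the contribution of the clean wedge between the two halfspaces, while anti-concentration ($\gamma_V(x) \leq U$) upper-bounds the noisy mass near the boundary. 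The Tsybakov condition $\Pr[\eta(x)\geq 1/2 - t] \leq A t^{\alpha/(1-\alpha)}$ then bounds the adversarial high-noise contribution to $g(v)$ by $O(A\,\tau^{\alpha/(1-\alpha)})$, which for $\tau$ chosen as a small polynomial in $\theta$ is dominated by the clean signal of order $L R \theta$.

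Next I would run projected (sub-)gradient descent with the empirical gradient $\hat g(v)$, using $N = \beta^{4}(dUA/RL\varepsilon)^{O(1/\alpha)}\log(1/\delta)$ samples per iteration. Subexponential concentration (the $\beta$ parameter) lets me invoke a vector Bernstein / covering argument on the $d$-dimensional ball to conclude $\|\hat g(v) - g(v)\|_2 \leq \frac{1}{2}\|g(v)\|_2$ uniformly over the trajectory with probability $1-\delta$, giving a $\beta^4$ factor in the sample bound. The structural lemma then implies an update equation $\theta_{t+1} \leq (1-c)\,\theta_t$ whenever $\theta_t \geq \varepsilon$, so $T = O((1/\alpha)\log(1/\varepsilon))$ iterations suffice. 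Finally, once $\theta(\hat v, w) \leq \varepsilon/(CU)$, I would close the loop by bounding the disagreement region between the two halfspaces: it lies in a double wedge of angular width $\theta$, whose mass under any $(3,L,R,U,\beta)$-well-behaved distribution is $O(U\theta)$ by projecting onto the plane $\mathrm{span}(\hat v, w)$ and integrating the anti-concentration bound. Combining gives $\Pr[h(x) \neq f(x)] \leq \varepsilon$.

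The hard part will be the structural lemma in paragraph two. Under Massart noise one has a uniform bound $\eta(x) \leq \eta_0 < 1/2$, and the gradient-alignment argument is standard; the Tsybakov regime is genuinely harder because $\eta(x)$ may approach $1/2$ arbitrarily. The proof must carefully balance (i) the Tsybakov tail $A t^{\alpha/(1-\alpha)}$ that controls how much mass has noise within $t$ of $1/2$, (ii) the margin-band width $\tau$ over which the surrogate is sensitive, and (iii) the clean-wedge signal strength $\Omega(LR\theta)$. Getting these three quantities to balance so that the empirical gradient remains informative while the total sample size grows only polynomially (with exponent $O(1/\alpha)$) in $1/\varepsilon$ is the technical heart of the argument and dictates the precise form of the sample bound in the statement.
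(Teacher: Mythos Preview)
This theorem is not proved in the paper at all: it is quoted verbatim as \cite[Theorem 5.1]{diakonikolas2021efficiently} and used as a black box to establish the subsequent result (Theorem~\ref{well-behaved}). There is therefore no ``paper's own proof'' to compare your proposal against.

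For what it is worth, your sketch is in the right spirit for the argument in the cited source: a non-convex surrogate, a structural lemma showing the population gradient is correlated with the direction that shrinks $\theta(v,w)$, Tsybakov-controlled noise mass balanced against the clean wedge signal via the anti-(anti-)concentration bounds, and subexponential concentration to pass to empirical gradients. But since the present paper simply imports the theorem, the appropriate ``proof'' here is a one-line citation, and any detailed argument is out of scope for this manuscript.
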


We can leverage this result to obtain an efficient learning algorithm to predict pairwise comparisons for a broad range of distributions.

\begin{theorem}{\label{well-behaved}}
   Suppose that $\mathcal{P}_\phi$ is a $(3,L,R,U,\beta)$-well-behaved isotropic distribution, and the noise c.d.f.\ $F^{-1}(\zeta) \le \mathrm{poly}(\zeta)$ on $(0,\frac{1}{2}]$. Then the linear utility class $\mathcal{U}$ is efficiently PAC-PC learnable. 
\end{theorem}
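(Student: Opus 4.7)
The plan is to reduce the problem to learning a halfspace under Tsybakov noise and then invoke Theorem 5.1 of Diakonikolas et al.\ on the transformed instance.

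First, I would pass to the difference variable $z \equiv \Delta_\phi(x) \sim \mathcal{P}_\phi'$, as in the proof of Theorem~\ref{T:e1no-noise}. The observed label $y$ is exactly the noisy $\{0,1\}$-encoded sign of $w^{*T}z$, flipped with probability $\eta(z) = F(-|w^{*T}z|) < 1/2$. Because $z$ is a linear transformation of the pair $(\phi(x),\phi(x'))$ drawn from the $(3,L,R,U,\beta)$-well-behaved isotropic $\mathcal{P}_\phi$, the pushforward $\mathcal{P}_\phi'$ on $z$ is itself well-behaved, possibly with altered constants; a standard whitening pre-processing step, whose effect on the algorithm's parameters is polynomial, places the problem in the exact form Theorem 5.1 expects.

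Second, I would verify the $(\alpha, A)$-Tsybakov noise condition for $\eta(z)$ on $\mathcal{P}_\phi'$. By monotonicity of $F$ and the symmetry $F(v)+F(-v)=1$ established in Section~\ref{prelim}, the event $\eta(z) \ge 1/2 - t$ is equivalent to $|w^{*T}z| \le F^{-1}(1/2+t)$. Reading the hypothesis $F^{-1}(\zeta) \le \mathrm{poly}(\zeta)$ on $(0,1/2]$ symmetrically as $F^{-1}(1/2+t) \le c\, t^\gamma$ for some constants $c,\gamma>0$, the tail event reduces to the margin condition $|w^{*T}z| \le c\, t^\gamma$. The anti-concentration part of well-behavedness, applied to the one-dimensional projection along $w^*$, bounds the projected density by $U$, so
\begin{equation*}
\Pr_{z\sim\mathcal{P}_\phi'}\bigl[\eta(z)\ge 1/2-t\bigr] \;\le\; 2Uc\, t^\gamma,
\end{equation*}
which is the $(\alpha, A)$-Tsybakov condition with $\alpha/(1-\alpha) = \gamma$ and $A = 2Uc$.

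Third, I would feed the transformed sample into Theorem 5.1 of Diakonikolas et al., which returns $\hat{w}$ satisfying $\Pr_z[\mathrm{sign}(\hat{w}^T z) \ne \mathrm{sign}(w^{*T} z)] \le \varepsilon$ from $N = \mathrm{poly}(m, 1/\varepsilon, \log(1/\delta))$ samples, with the exponent in the $\mathrm{poly}$ depending only on $\alpha, L, R, U, \beta$. Since this error coincides exactly with $e_1(\hat{u}, u)$ for $\hat{u}(\cdot) = \hat{w}^T\phi(\cdot)$, this witnesses efficient PAC-PC learnability. The main obstacle I anticipate is not the invocation of Theorem 5.1 itself but the bookkeeping in the preceding reduction: propagating well-behavedness and isotropy from $\mathcal{P}_\phi$ on pairs to $\mathcal{P}_\phi'$ on differences with quantitative control of $L,R,U,\beta$, and pinning down a valid Tsybakov exponent $\alpha \in (0,1)$ from the polynomial-in-$\zeta$ bound on $F^{-1}$. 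A secondary, minor point is that Theorem 5.1 delivers a generic halfspace rather than a normalized element of $\mathcal{U}$, but $e_1$ depends only on the sign of $\hat{w}^T z$, so any positively-scaled version of the output suffices.
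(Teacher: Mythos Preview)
Your high-level plan matches the paper's: reduce to learning a halfspace with Tsybakov noise under a well-behaved marginal and invoke Theorem~5.1 of Diakonikolas et al. A minor difference is that you pass to the difference variable $z\sim\mathcal{P}_\phi'$ on $\mathbb{R}^m$ and argue $\mathcal{P}_\phi'$ is itself well-behaved, whereas the paper stays in $\mathbb{R}^{2m}$ with the pair $\pi(x)=(\phi(x),\phi(x'))\sim\mathcal{P}_\phi$. Your route is legitimate---a $k$-dimensional projection of $\mathcal{P}_\phi'$ is, up to a $\sqrt{2}$ scaling, a $k$-dimensional projection of $\mathcal{P}_\phi$ onto an anti-diagonal subspace, so the $(3,L,R,U,\beta)$ parameters transfer with explicit constants---but this deserves to be written out rather than asserted.

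There is, however, a real gap in your Tsybakov verification. You write that ``the anti-concentration part of well-behavedness, applied to the one-dimensional projection along $w^*$, bounds the projected density by $U$.'' But the hypothesis is $(3,L,R,U,\beta)$-well-behaved: the density bound $\gamma_V\le U$ holds only for \emph{three}-dimensional projections, not one-dimensional ones, and a uniform bound on a 3D density does not by itself bound the 1D marginal density (integrating out two unbounded coordinates can diverge). This is precisely why the paper embeds the direction of interest in a specific 3D subspace and then combines the 3D density bound with the subexponential concentration on the two orthogonal coordinates: restrict those coordinates to a window $[-H,H]$, bound that contribution by $U\cdot 2h\cdot(2H)^2$, control the complement by $2\exp(1-H/\beta)$, and choose $H$ as a function of $h$ to obtain a polynomial-type bound on $\Pr[|w^{*T}z|\le h]$. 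Without this step the chain from the margin event $|w^{*T}z|\le F^{-1}(1/2+t)$ to a Tsybakov bound $At^{\alpha/(1-\alpha)}$ is broken. Once you insert this argument---which is really the technical heart of the paper's proof---the remainder of your outline goes through.
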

\begin{proof}
Since $\mathcal{P}_\phi$ is $(3,L,R,U,\beta)$-well-behaved over $(\phi(x),\phi(x'))$, we consider a special 3-dimensional subspace $V$ with its basis being $u_1=(w^*,w^*), u_2=(v_2,v_2),u_3=(v_3,v_3)$, where $v_2$,$v_3$ are two orthonormal vectors lying on the hyperplane defined by our utility parameter $w^*$. Since $w^{*^T}v_2=w^{*^T}v_3=v_2^Tv_3=0$, these three vectors are linearly independent and form a basis.

Recall that for any point $\pi(x)=(\phi(x),\phi(x'))$ viewed as in the product space $\mathcal{X}^2$, its projected coordinates in $V$ is calculated as
$\pi(x)_V= (u_1^T \pi(x))  \frac{u_1}{||u_1||}+ (u_2^T \pi(x)) \frac{u_2}{||u_2||}+(u_3^T \pi(x)) \frac{u_3}{||u_3||}= ( u_1^T\pi(x), u_2^T\pi(x), u_3^T\pi(x)).$

First, by the subexponential concentration property, for any fixed $h>0$,  $\Pr_{\pi(x)\sim \mathcal{P}_\phi} \left( |u_2^T\pi(x)| \ge  h \right) < \exp(1- h/\beta) $, and  $\Pr_{\pi(x)\sim \mathcal{P}_\phi} \left( |u_3^T\pi(x)| \ge  h     \right) < \exp(1- h/\beta) $. So with probability at most $2\exp(1- h/\beta)$, the second or the third coordinate of the projection is not within $[-h,h].$

Next, by the anti-concentration property, the p.d.f. of the projection  $\pi(x)_V$ is bounded by a constant $\gamma_{V}(\pi(x))\le U$. Hence the total probability of the projection $\pi(x)_V$'s first coordinate between $[-h,h]$ is 
 $ Pr_{\pi(x) \sim \mathcal{P}_\phi}  \left(  |u_1^T\Delta_\phi(x)| \le h \right) \le  \int_{-h}^{h}   \int_{-h}^{h}   \int_{-h}^{h} U dt_1dt_2 dt_3+2  \exp(1- h/\beta)= 8Uh^3+  \exp(1- h/\beta)$. Since for any $\phi(x)<1$, $\exp(\phi(x))<1+\phi(x)+\phi(x)^2$, and $1-h/\beta <1$,  $\exp(1- h/\beta)< 1+ ((1- h/\beta)+ (1- h/\beta)^2$. So 
  $ Pr_{\pi(x) \sim \mathcal{P}_\phi}  \left( |u_1^T\Delta_\phi(x)| \le h \right)\le p_1(h) $ has a degree-$3$ polynomial upper bound.
 
Now, consider a $1-D$ projection of this subspace through $\text{Proj}(\phi(x),\phi(x'))=\phi(x')-\phi(x)$. We get $\text{Proj}(V)=\{(w^{*^T}\Delta_\phi(x), v_2^T\Delta_\phi(x), v_3^T\Delta_\phi(x))\}$.
Due to the triangle and Cauchy-Schwartz inequality, 
$|w^{*^T}\Delta_\phi(x)|=|w^{*^T}\phi(x')-w^{*^T}\phi(x)|\le |w^{*^T}\phi(x')| + |w^{*^T}\phi(x)| \le \sqrt{2}|u_1^T
\pi(x)|$. Hence, the probability of the first coordinate of $P(V)$, i.e. the margin $w^{*^T}\Delta_\phi(x)$ being between $[-\sqrt{2}h,\sqrt{2}h]$ has a polynomial upper bound:
$\Pr_{\Delta_\phi(x)\sim \mathcal{P}'_\phi} \left( |w^{*^T}\Delta_\phi(x) | \le  \sqrt{2}h     \right) \le \Pr_{\pi(x)\sim \mathcal{P}_\phi} \left( |u_2^T\pi(x)| \ge  h \right) < p_1(h).$

Because $\Pr_{\Delta_\phi(x) \sim \mathcal{P}'_\phi} \left(  \eta(\Delta_\phi(x)) \ge \frac{1}{2}-t \right)=\Pr_{\Delta_\phi(x) \sim \mathcal{P}'_\phi}  \left(  |w^{*^T}\Delta_\phi(x)| \le F^{-1}(\frac{1}{2}-t) \right)$,
as long as $F^{-1}(\frac{1}{2}-t)$ has a polynomial upper bound $p_2(t)$, we could establish another polynomial upper bound
$\Pr_{\Delta_\phi(x) \sim \mathcal{P}'_\phi}  \left(  |w^{*^T}\Delta_\phi(x)| \le F^{-1}(\frac{1}{2}-t) \right)<  p_1(\frac{F^{-1}(\frac{1}{2}-t)}{\sqrt{2}}) \le p_1 (\frac{p_2(t)}{\sqrt{2}})\in poly (t).   $


In other words, we could bound $\Pr_{\Delta_\phi(x) \sim \mathcal{P}'_\phi}  \left(  |w^{*^T}\Delta_\phi(x)| \le F^{-1}(\frac{1}{2}-t) \right) \le At^{\frac{\alpha}{1-\alpha}} $ by taking the leading coefficient of $p_1 (\frac{p_2(t)}{\sqrt{2}})$ being $A$, and $\frac{\alpha}{1-\alpha}$ being the degree of $p_1 (p_2(t))+1$.
Our noise model satisfies the $(\alpha,A)-$Tsybakov noise condition, and the algorithm in \cite{diakonikolas2021efficiently} applies.
\end{proof}

Next we show that the standard models of noise in RUM settings---Bradley-Terry~\cite{bradley1952rank} and Thurstone-Mosteller~\cite{thurstone1927law}---both satisfy the condition on the noise c.d.f.~in Theorem~\ref{well-behaved}.

\begin{proposition}
    The inverse of the c.d.f.~for the Bradley-Terry model satisfies $F^{-1}(x) \le \mathrm{poly}(x)$ on $(0,\frac{1}{2}]$.
\end{proposition}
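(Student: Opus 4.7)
The plan is to verify the bound through a direct calculation of the Bradley--Terry inverse c.d.f.\ in closed form. For the Bradley--Terry model, $F$ is the standard logistic, $F(z) = 1/(1+e^{-z})$, and inverting yields the logit $F^{-1}(y) = \ln(y/(1-y))$. The analysis thus reduces to elementary estimates on a single explicit expression, so no probabilistic machinery beyond monotonicity of $F$ and standard bounds on $\ln$ are needed.

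First I would observe that for $x \in (0, \tfrac12]$ we have $x/(1-x) \in (0, 1]$, hence $F^{-1}(x) \le 0$, so the literal inequality $F^{-1}(x) \le p(x)$ in the proposition already holds with the zero polynomial. The substantive content that is actually needed for Theorem~\ref{well-behaved} is a polynomial bound on the magnitude $|F^{-1}(\tfrac12 - t)| = \ln((1+2t)/(1-2t))$, which is what plays the role of $p_2(t)$ in that proof. I would derive this from two standard logarithm estimates, $\ln(1+2t) \le 2t$ and $-\ln(1-2t) \le 2t/(1-2t)$, whose sum gives $|F^{-1}(\tfrac12 - t)| \le 2t + 2t/(1-2t)$. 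This is linear in $t$ uniformly over any closed subinterval $[0, \tfrac12 - \epsilon]$ of $[0,\tfrac12)$, yielding a degree-one polynomial control of exactly the kind needed in Theorem~\ref{well-behaved}.

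The main obstacle is the boundary behavior at $t = \tfrac12$: since $F^{-1}(0) = -\infty$, no polynomial in $t$ can bound the magnitude uniformly all the way up to the endpoint. I would dispatch this by observing that the Tsybakov inequality $\Pr[\eta \ge \tfrac12 - t] \le A\,t^{\alpha/(1-\alpha)}$ is automatically satisfied for $t$ bounded away from $0$ by choosing the Tsybakov constant $A$ sufficiently large, since the left-hand side is always at most $1$. Hence the linear bound on a slightly shrunken subinterval $[0, \tfrac12 - \epsilon]$ is enough to certify the $(\alpha,A)$-Tsybakov condition required by Theorem~\ref{well-behaved} globally, which is all that the proposition is used to establish.
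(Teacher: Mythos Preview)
Your argument for the proposition as literally stated is correct and in fact simpler than the paper's: since $F^{-1}(x)=\ln\frac{x}{1-x}\le 0$ on $(0,\tfrac12]$, the zero polynomial already serves as an upper bound. The paper takes a genuinely different route: it computes the derivative $\frac{d}{dx}\,\mathrm{logit}(x)=\frac{1}{x-x^2}$, observes this is monotone decreasing on $(0,\tfrac12]$, concludes that $\mathrm{logit}$ is concave there, and hence lies below its tangent line at $x=\tfrac12$, giving the sharper bound $F^{-1}(x)\le 4x-2$. Both arguments establish the stated inequality; yours with less work, the paper's with a tighter (still nonpositive) polynomial.

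Your further discussion of $|F^{-1}(\tfrac12-t)|$, the divergence at $t\to\tfrac12$, and the observation that the Tsybakov bound is automatically met for $t$ bounded away from $0$ is a sound and more careful treatment of what Theorem~\ref{well-behaved} actually requires once the sign conventions are straightened out. The paper does not carry out this analysis in its proof of the proposition; it simply records the tangent-line bound. Note, incidentally, that the paper's bound $F^{-1}(\tfrac12-t)\le -4t$ gives only a \emph{lower} bound on $|F^{-1}(\tfrac12-t)|$, so your extra step is not redundant with the concavity argument.
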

\begin{proof}
    The inverse of the standard logistic function $F(x)=\frac{1}{1+\exp(-x)}$ is the logit function
    $F^{-1}(x)=logit(x)=\ln(\frac{x}{1-x})$.
    Because the derivative of the logit function $\frac{1}{x-x^2}$ is monotonically decreasing from $\infty$ to $4$ on $(0,\frac{1}{2}]$, the logit function is concave. Hence, it is bounded above by its gradient at $x=\frac{1}{2}$, which is $4x-2$. We have found a polynomial upper bound for $F^{-1}(x) \le 4x-2$ for $x \in (0,\frac{1}{2}]$.
\end{proof}

\begin{proposition}
   The inverse of the c.d.f.~for the Thurstone-Mosteller model satisfies $F^{-1}(x) \le \mathrm{poly}(x)$ on $(0,\frac{1}{2}]$.
\end{proposition}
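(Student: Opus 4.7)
The plan is to mirror the proof of the Bradley-Terry proposition almost verbatim: identify the relevant c.d.f., show its inverse is concave on $(0, \tfrac{1}{2}]$, and then upper-bound it by the tangent line at $x = \tfrac{1}{2}$, which is linear (hence polynomial).

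For Thurstone-Mosteller, each $\tilde{\zeta}$ is Gaussian, so the induced difference $\zeta = \tilde{\zeta}(\phi(x')) - \tilde{\zeta}(\phi(x))$ is also Gaussian and (after rescaling) we may take $F = \Phi$, the standard normal c.d.f., so $F^{-1}$ is the probit function. The key computation is the derivative $(F^{-1})'(x) = 1/\varphi(\Phi^{-1}(x))$, where $\varphi$ is the standard normal p.d.f. As $x$ traverses $(0, \tfrac{1}{2}]$, $\Phi^{-1}(x)$ increases from $-\infty$ to $0$, so $\varphi(\Phi^{-1}(x)) = \frac{1}{\sqrt{2\pi}} \exp(-\Phi^{-1}(x)^2/2)$ increases monotonically from $0$ to $\tfrac{1}{\sqrt{2\pi}}$. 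Hence $(F^{-1})'$ is monotonically decreasing from $+\infty$ to $\sqrt{2\pi}$ on the interval, which shows $F^{-1}$ is concave there.

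By concavity, $F^{-1}$ lies below its tangent line at any point; evaluating at $x = \tfrac{1}{2}$ (where $F^{-1}(\tfrac{1}{2}) = 0$ and the slope equals $\sqrt{2\pi}$) yields
\[
F^{-1}(x) \le \sqrt{2\pi}\,(x - \tfrac{1}{2}) \quad \text{for all } x \in (0, \tfrac{1}{2}],
\]
which is a degree-$1$ polynomial upper bound, as required. If one does not rescale to the standard normal, the same argument goes through with $\sqrt{2\pi}$ replaced by $\sigma\sqrt{4\pi}$ for $\zeta \sim N(0, 2\sigma^2)$, which is still a constant.

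There is no real obstacle here; the only thing to verify carefully is the monotonicity of the derivative, and this reduces to the observation that $\Phi^{-1}(x)^2$ is monotonically decreasing in $x$ on $(0, \tfrac{1}{2}]$, which is immediate because $\Phi^{-1}$ is increasing and negative on that interval. The proof is entirely parallel to the Bradley-Terry case, with the logit function replaced by the probit.
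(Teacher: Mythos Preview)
Your proposal is correct and follows essentially the same approach as the paper: show the inverse c.d.f.\ is concave on $(0,\tfrac12]$ by checking its derivative is monotone decreasing, then bound it by the tangent line at $x=\tfrac12$. The only cosmetic difference is that the paper works through the error-function representation $F^{-1}(x)=\sqrt{2}\,\mathrm{erf}^{-1}(2x-1)$ while you work directly with the probit $\Phi^{-1}$; both computations yield the identical linear bound $F^{-1}(x)\le \sqrt{2\pi}\,(x-\tfrac12)$.
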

\begin{proof}
     The inverse of the standard Gaussian c.d.f function $F(x)=\frac{1}{2}(1+\text{erf}(\frac{x}{\sqrt{2}}))$ is 
    $F^{-1}(x)=\sqrt{2}\text{erf}^{-1}(2x-1)$, where $\text{erf}$
  is the error function.
  Again, as the derivative of the inverse error function is 
  $\frac{\sqrt{\pi}}{2}\exp([\text{erf}^{-1}(x)]^2)$ , which is
  monotonically decreasing on $(-1,0]$, the inverse error function is concave and bounded above the gradient at $x=0$, which is $\frac{\sqrt{\pi}}{2}x$. Hence
$F^{-1}(x)=\sqrt{2}\text{erf}^{-1}(2x-1) \le \frac{\sqrt{2\pi}}{2}(2x-1)=\sqrt{2\pi}x- \frac{\sqrt{2\pi}}{2}$ for $x \in (0,\frac{1}{2}]$.
\end{proof}

\subsection{Estimating Utility Parameters\label{estimate}}

Next, we tackle the more challenging learning goal represented by the error function $e_2$, that is, where our goal is to learn to effectively estimate the parameters $w$ of the true linear utility model in the $\ell_p$ sense (focusing on $\ell_2$ here for clarity of exposition).

We begin with the known positive result.
Specifically, \citeauthor{pmlr-v202-zhu23f}~\cite{pmlr-v202-zhu23f} showed that maximum likelihood estimation (MLE) with the common BT noise model achieves efficient utility estimation in the following sense. 
With probability at least $1-\delta$, the MLE parameter estimate $\hat{w}$ from $n$ samples can achieve a bounded error measured in a seminorm with respect to $\Sigma= \frac{1}{n}\sum_{i=1}^n (\phi(x_i)-\phi(x'_i)_i) (\phi(x_i)-\phi(x'_i))^T$. With $C$ being constant,
\begin{equation}\label{zhu}
    ||\hat{w}-w^*||_{\Sigma} = \sqrt{(\hat{w}-w^*)^T\Sigma(\hat{w}-w^*)} \leq C \cdot \sqrt{\frac{m+\log(1/\delta)}{n}}.
\end{equation}


We are able to extend their positive result to an even bigger class of RUMs. The proof is deferred to the appendix.

\begin{theorem}\label{thm:generalisezhu}

Consider the loss function, $\ell_{\mathcal{D}}(w) = -\frac{1}{n}\sum_{i=1}^n \log \left(1(y^i=1) \cdot \Pr(x'\succ x) +  1(y^i=0) \cdot \Pr(x \succ x')\right).$
If there exists $\gamma>0$  such that the noise  c.d.f. $F(z)$ satisfies $F'(z)^2 - F''(z) \cdot F(z) \ge \gamma$ for all $z$, then with probability $1-\delta$, the MLE estimator $\hat{w}$ for $\ell_{\mathcal{D}}(w)$ can achieve inequality (\ref{zhu}).\end{theorem}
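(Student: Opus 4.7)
My plan is to mirror the strategy used by Zhu et al.~for (\ref{zhu}) but to replace their Bradley-Terry-specific curvature computation with the abstract hypothesis $F'(z)^2 - F''(z) F(z) \ge \gamma$. Writing $u_i = w^T \Delta_\phi(x_i)$ and using $F(-z) = 1 - F(z)$ and $F'(-z) = F'(z)$ (both inherited from $F(z) + F(-z) = 1$), the per-sample negative log-likelihood collapses in either branch to $g_i(w) = -\log F(\pm u_i)$, and a direct differentiation gives
\[
\nabla_w^2 g_i(w) \;=\; \Delta_\phi(x_i)\,\Delta_\phi(x_i)^T \cdot \frac{F'(\pm u_i)^2 - F''(\pm u_i)\,F(\pm u_i)}{F(\pm u_i)^2}.
\]
Because $F$ is a c.d.f., $F(z)^2 \le 1$, so the scalar factor is at least $\gamma$ by hypothesis. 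Averaging over $i$ yields $\nabla^2 \ell_{\mathcal{D}}(w) \succeq \gamma\,\Sigma$ for \emph{every} $w$; that is, $\ell_{\mathcal{D}}$ is $\gamma$-strongly convex in the seminorm $\|\cdot\|_\Sigma$.

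Next I would exploit the optimality of the MLE in the standard way. Strong convexity gives
\[
\ell_{\mathcal{D}}(\hat w) - \ell_{\mathcal{D}}(w^*) \;\ge\; \langle \nabla \ell_{\mathcal{D}}(w^*),\, \hat w - w^*\rangle + \tfrac{\gamma}{2}\,\|\hat w - w^*\|_\Sigma^2,
\]
while $\ell_{\mathcal{D}}(\hat w) \le \ell_{\mathcal{D}}(w^*)$ by definition of $\hat w$. Rearranging and applying Cauchy--Schwarz in the $\Sigma$ seminorm yields
\[
\|\hat w - w^*\|_\Sigma \;\le\; \tfrac{2}{\gamma}\,\|\nabla \ell_{\mathcal{D}}(w^*)\|_{\Sigma^{-1}},
\]
so it remains only to control the gradient at the truth.

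For that step I would first verify that $\mathbb{E}_{y_i \mid \Delta_\phi(x_i)}[\nabla g_i(w^*)] = 0$: the per-sample gradient equals $-F'(u_i)\Delta_\phi(x_i)/F(u_i)$ with probability $F(u_i)$ and $F'(u_i)\Delta_\phi(x_i)/(1 - F(u_i))$ with probability $1 - F(u_i)$, and the weighted sum telescopes to zero using $F'(-z) = F'(z)$. Thus $\nabla \ell_{\mathcal{D}}(w^*)$ is an average of $n$ i.i.d.\ mean-zero, bounded vectors, and a vector Bernstein / matrix concentration bound in the $\Sigma^{-1}$ seminorm---the exact argument used by Zhu et al.---yields $\|\nabla \ell_{\mathcal{D}}(w^*)\|_{\Sigma^{-1}} \le C'\sqrt{(m + \log(1/\delta))/n}$ with probability at least $1 - \delta$. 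Chaining with the previous inequality then produces (\ref{zhu}) with the constant $C$ absorbing $1/\gamma$.

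The main obstacle I anticipate is the \emph{uniform} strong convexity: the Hessian lower bound has to hold at every $w$ in the feasible set (not merely near $w^*$), so that minimality of $\hat w$ really translates to parameter proximity. The hypothesis ``for all $z$'' is tailored to this, but one must still check that the two $y$-branches of $g_i$ share the same curvature formula through the symmetries of $F$, which is why the identities $F(-z) = 1 - F(z)$ and $F'(-z) = F'(z)$ do the heavy lifting. A secondary subtlety is the boundedness required for the concentration step, which follows from $\Delta_\phi(x) \in [-1,1]^m$ together with the same mild regularity on the score $F'/F$ already implicit in Zhu et al.'s setup.
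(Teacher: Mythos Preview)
Your proposal is correct and follows essentially the same route as the paper: compute the Hessian, use the hypothesis $F'(z)^2 - F''(z)F(z)\ge\gamma$ together with $F(z)^2\le 1$ to obtain $\nabla^2\ell_{\mathcal D}(w)\succeq\gamma\Sigma$, combine strong convexity with optimality of $\hat w$ and Cauchy--Schwarz, and then control $\|\nabla\ell_{\mathcal D}(w^*)\|$ via a sub-Gaussian/Bernstein concentration bound after checking that the per-sample score has mean zero. The only cosmetic difference is that the paper works with the regularized dual norm $\|\cdot\|_{(\Sigma+\lambda I)^{-1}}$ (to sidestep possible singularity of $\Sigma$) whereas you write $\|\cdot\|_{\Sigma^{-1}}$ directly, and the paper asserts $F'/F<1$ outright for the sub-Gaussian step while you flag boundedness of the score as a regularity assumption; neither changes the argument.
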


Consequently, as long as $\Sigma$'s smallest eigenvalue is bounded from below (which excludes the case where $\phi(x),\phi(x')$ are consistently close, as $\Sigma$ tends to 0 the bound becomes vacuous), 
sample complexity with respect to $e_2$ will also be quadratic.
Formally, suppose that the smallest eigenvalue  $\lambda_{min}\ge C_2>0$.
Then,
\begin{align*}
||\hat{w}-w^*||_2&=\sqrt{\frac{C_2}{C_2}(\hat{w}-w^*)^T(\hat{w}-w^*)}\\
&\le\sqrt{\frac{1}{C_2}(\hat{w}-w^*)^T\Sigma(\hat{w}-w^*)}\\ 
&\le \overline{C} \cdot \sqrt{\frac{m+\log(1/\delta)}{n}}
\end{align*}
for a constant $\overline{C}$.

Now we show that \emph{without any structural noise assumption,} learning to estimate linear utility parameters in the $\ell_2$ norm is impossible.
\begin{theorem}
    If there is no noise, the class of linear utility functions $\mathcal{U}$ is not always passively learnable, i.e.
     there is a probability distribution $\mathcal{P}$ over $(x,x')$, that
     no learning algorithm can achieve
     $e_2 \leq \varepsilon$ with probability at least $1-\delta$, for any
     $0 <\varepsilon,\delta < 1$ under a finite set of samples.
\end{theorem}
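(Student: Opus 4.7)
The plan is to rule out PAC-PC learnability under $e_2$ by an adversary argument: I will exhibit a single distribution $\mathcal{P}$ whose labelled samples carry no information about $w^*$ at all, and then use a two-point lower bound across two well-separated points of the simplex.

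For the construction, I would fix any $x_0 \in \mathbb{R}^d$ and let $\mathcal{P}$ be the point mass on the pair $(x_0,x_0)$, so that $\Delta_\phi(x)=0$ almost surely. In the noise-free regime this gives $w^{*T}\Delta_\phi(x)+\zeta = 0$ deterministically, and by the tie-breaking convention of Section~\ref{prelim} (namely that $\mathrm{sign}(0)$ equals $0$ or $1$ with equal probability) the label $y_i$ is then an independent $\mathrm{Bernoulli}(1/2)$ draw. Consequently the joint law of any finite training set $\mathcal{D}=\{(x_i,x_i',y_i)\}_{i=1}^n$ is the \emph{same} for every $w^*$ lying in the simplex $\{w\ge 0:\|w\|_1=1\}$.

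I would then invoke the standard two-point lower bound with $w_1^* = (1,0,\dots,0)$ and $w_2^* = (0,1,0,\dots,0)$, both on the simplex and satisfying $\|w_1^*-w_2^*\|_2=\sqrt{2}$. For any (possibly randomized) algorithm $\mathcal{A}$, the output $\hat{w} = \mathcal{A}(\mathcal{D})$ has the same distribution whether the hidden target is $w_1^*$ or $w_2^*$. For $\varepsilon<\sqrt{2}/2$ the balls $B(w_1^*,\varepsilon)$ and $B(w_2^*,\varepsilon)$ are disjoint, so
\[
\Pr[\hat{w}\in B(w_1^*,\varepsilon)] + \Pr[\hat{w}\in B(w_2^*,\varepsilon)] \le 1,
\]
forcing one of these probabilities to be at most $1/2$. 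Choosing that target as the true parameter, the algorithm fails to achieve $\Pr[\|\hat{w}-w^*\|_2\le\varepsilon]\ge 1-\delta$ for any $\delta<1/2$, irrespective of the finite sample size $n$.

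I do not expect a serious technical obstacle; the essential design point is engineering $\mathcal{P}$ so that the labels are independent of $w^*$. Should the degeneracy $\phi(x)=\phi(x')$ appear unsatisfactory, a fallback is to place $\mathcal{P}$ on a single pair with $\Delta_\phi\ge 0$ coordinatewise and $\Delta_\phi\neq 0$: since $w^*\ge 0$ with $\|w^*\|_1=1$, the inner product $w^{*T}\Delta_\phi$ is then strictly positive and the label is deterministically $1$, again independent of $w^*$, so the same two-point argument carries through and yields impossibility for all $\varepsilon<\sqrt{2}/2$ and $\delta<1/2$, which suffices to contradict the PAC-PC definition.
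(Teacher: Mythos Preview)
Your proposal is correct and follows essentially the same route as the paper: build a distribution $\mathcal{P}$ under which the labels are independent of $w^*$, then run a two-point separation on the simplex using vertices $e_1,e_2$ at distance $\sqrt{2}$. The paper in fact uses your \emph{fallback} construction (pairs with $\Delta_\phi$ strictly positive coordinatewise, so the label is deterministically $1$) rather than the degenerate pair $(x_0,x_0)$.

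Two small cleanups. First, in your fallback the hypothesis ``$\Delta_\phi\ge 0$ and $\Delta_\phi\neq 0$'' does \emph{not} force $w^{*T}\Delta_\phi>0$ for every $w^*$ on the simplex (take $\Delta_\phi=e_1$ and $w^*=e_2$, giving $w^{*T}\Delta_\phi=0$ and a Bernoulli$(1/2)$ label, which would then carry information distinguishing $e_1$ from $e_2$); you need $\Delta_\phi>0$ strictly in every coordinate, which is exactly what the paper assumes. Second, recall that $e_2=\|\hat w-w^*\|_2^2$, so your threshold $\varepsilon<\sqrt{2}/2$ on the norm translates to $e_2<1/2$, matching the paper's statement.
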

\begin{proof}
Let $\phi(x)=x$, and 
consider an oracle that only provides pairs $(x,x')$ with all coordinates satisfying $x'^{i}>x^i$ on each dimension $i$ . 
Then since $w\geq 0$, $w^T(x'-x)\geq 0$ for every $(x,x')$; in other words, the labels are uninformative and any algorithm produces the same distribution on guesses $\hat{w}$ for all $w$.
For $\delta<\frac{1}{2}$, we can consider $w,w'\in \mathcal{U}$ with distance equal to $\sqrt{2}$. This is achievable because $\mathcal{U}$ contains segments of length $\sqrt{2}$ defined by $\{w^i+w^j=1, 0 \le w^i,w^j\le 1\}$ for any distinct pair of $i,j \in \{1,\dots,m\}$ and we know $m\ge 2$ for otherwise there is no need to learn. But the algorithm cannot output $\hat{w}$ with $e_2<(\frac{\sqrt{2}}{2})^{2}=\frac{1}{2}$ for both $w$ and $w'$ simultaneously, and hence its output must obtain $e_2\geq \frac{1}{2}$ with probability $\geq \frac{1}{2}$ for one of these.\end{proof}

The reason behind this contrast between the positive and negative results lies precisely in the assumption of the noise model.
Unlike a common intuition that noise makes learning more challenging, a highly structured model assumption like BT actually provides more information to each query. 
From a pair where $\Delta_\phi(x)>0$, if we can estimate $\Pr(x' \succ x)$, since $\Pr(x' \succ x)=F(w^{*T}\Delta_\phi(x)), $ we obtain information about $w^*$ immediately.

\begin{figure}[H]
    \centering
    \includegraphics[width=5cm]{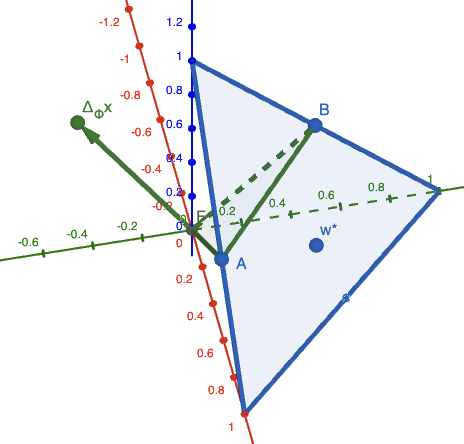}
    \caption{From the label we know $w^*$ is below the hyperplane of $\Delta_\phi(x)$.        }
\end{figure}

To illustrate a necessary and sufficient results for passive learning in the noise-free or noise-agnostic setting, we consider the problem from a geometric perspective.
It is helpful to recall a learning theory concept named \emph{version space}. A version space consists of all ``surviving'' hypotheses that are consistent with the conclusions we can draw from all labels observed so far. 
In our case, our version space denoted as $\mathcal{W}$ will be initialised the same as $\mathcal{U}$.
Now as each datapoint comes along,  the label $y$ gives us an (inaccurate) sign of
$w^{*T} \Delta_\phi(x)$, indicating whether the vector $w^*$ that we are searching for is above, on, or below the hyperplane defined by its normal vector $\Delta_\phi({x)}$. The version space will update accordingly.
For example, when $m=3$, the initial version space (hypothesis class) is depicted as the triangle $\{w^1+w^2+w^3-1=0 \ | \ 0 \leq w^1,w^2,w^3 \leq 1\}$ in Figure 1, with the true parameter labeled as $w^*$. Then suppose we receive a noiseless datapoint labeled with $y<0$, we can infer $w^*$ is below the segment $AB$ drawn as in Figure 1 since that is the space $\Delta_\phi(x)$ is normal to.


Thus, in order to achieve the second learning goal without a model assumption, we have to wish for that the incoming datapoints can help shrink the version space into 
a $\varepsilon$-radius ball centred at $w^*$. The difficulty with respect to $e_2$ is exactly that we don't have any control over the incoming information. Hence we will see from the following section that active learning is much more advantageous.

\section{Active Learning}

Thus far, we noted that passive learning is generally quite challenging, particularly when responses to pairwise comparison queries are noisy.
However, in many practical settings, most notably, RLHF, utility (reward) models are \emph{not} learned passively.
Rather, one first carefully constructs and curates a dataset of input pairs, and only then obtains preference data for these~\cite{bai2022training,christiano2017deep}.
This is far more like an \emph{active learning} approach to the problem, for which there has been, to our knowledge, no prior work on learnability in the pairwise comparison model we consider here.
In this section, we demonstrate that the active learning model in this setting is qualitatively different from passive in that we now obtain strong efficient learnability results.

Our investigation falls under the umbrella of an interactive approach, which is considered a more efficient framework than pool and stream-based sampling techniques \cite{ling2008active,alabdulmohsin2015efficient,wang2016noise,chen2017near}. 
A common argument against query synthesis methods is that the artificial queries we thereby generate are uninterpretable. 
However, there is evidence that generating highly informative artificial training instances for tasks like text classification is feasible~\cite{schumann-rehbein-2019-active,piedboeuf2022working}. 
Although practical tricks may need to be developed to apply active learning approaches in a practical setting, these can in general save considerable annotation labor.
The second criticism is that if we do not have a distribution in mind, our learning may have limited value in terms of generalization. 
In any case, our study of active learning from a theoretical perspective also lends intuition as to why in practice there is considerable effort devoted to generating query inputs for learning utility models.

We structure our investigation in the same way as the passive learning case, considering first the problem of learning to predict responses to pairwise preference queries (the $e_1$ error model), and subsequently dealing with the more challenging problem of estimating utility parameters.

\subsection{Predicting Pairwise Preferences}

Just as in the passing learning setting, the positive results for active learning of linear utilities from pairwise comparisons in the $e_1$ sense
follow directly from known results for learning halfspaces.
In the noise-free case,
\citeauthor{alabdulmohsin2015efficient} \cite{alabdulmohsin2015efficient} and \citeauthor{chen2017near} \cite{chen2017near} provide efficient learning algorithms through query synthesis. 
\citeauthor{zhang2021improved} \cite{zhang2021improved}, in turn, address the setting with Tsybakov noise.
Consequently, we focus on the more challenging problem of estimating utility parameters in the active learning setting.
We tackle this problem next.


\subsection{Estimating Utility Parameters}


Continuing from our discussion in Section~\ref{estimate},
the goal in active learning is to ensure that in each step we query the most informative comparison pair inside the current version space in order to reduce the  size of the version space as fast as possible.  
We first provide an efficient algorithm for this in the noise-free setting, and subsequently in the case with query noise.
Here, a key challenge is an ability to invert the embedding $\phi(x)$.
Since our focus is on \emph{query} or \emph{sample} complexity, our results hold whether or not computing an inverse of $\phi$ (or, equivalently, a zero of $\phi(x)-v$ for a given $v$) is efficient (note that we do not need uniqueness); however, this does, of course, impact \emph{computational} complexity of the algorithms.
In special cases, such as if $\phi(x)=x$, or if $\phi(x)$ is affine (where an inverse can be computed using linear programming), computational complexity is polynomial as well, and more generally, we can in practice use gradient-based methods (such as Newton's method) to approximately find a zero of $\phi(x)-v$.

\begin{algorithm}[h]
    \caption{Noise-Free Active Learning}
    \label{alg:noisefreeal}
    \textbf{Input}: dimension of the instance space $m$, error bound $\varepsilon$ \\
    \textbf{Output}: $\hat{u}$ with $e_2(\hat{u},u) \leq \varepsilon$ 
    \begin{algorithmic}[1] 
    \STATE Initialize the Cartesian coordinate system $\mathcal{C}_1$ for $\mathbb{R}^m$
    \STATE Initialise the version space $\mathcal{W}=\{ w\in \mathbb{R}^{m} \ | \ w\ge0, ||w||_1=1\}$
    \STATE Initialize a separate $m-1$ dimensional Cartesian coordinate system $\mathcal{C}_2$ for $\mathcal{W}$
    \STATE $hypercube \gets []$ 
    \FOR{$i =1,\dots,m-1$}
    \STATE Let $s$ be the length of the version space $\mathcal{W}$ along the $i$-th axis of $\mathcal{C}_2$
    \WHILE{$s>2\varepsilon/\sqrt{m-1}$}
    \STATE Let $h_0,h_2$ be two $m-2$-dimensional hyperplanes tangent to the top and bottom of $\mathcal{W}$ along the $i$-th axis of $\mathcal{C}_2$
    \STATE Let $h_1$ be the $m-2$-dimensional hyperplane cutting through the middle of $\mathcal{W}$ along the $i$-th axis of $\mathcal{C}_2$
    \STATE Let $v$ be the outward pointing normal vector of the $m-1$-dimensional hyperplane in $\mathbb{R}^m$ consisting of $h_1$ and the origin of $\mathcal{C}_1$
        \STATE Let $x=0$ 
        \STATE Compute the inverse of $ \phi(0)+v$ as $x'=\Tilde{\phi}^{-1}(\phi(x)+v)$
        \STATE Let $h$ be the
        $m-2$ hyperplane in $\mathbb{R}^m$ as the intersection between $\mathcal{W}$ and the $m-1$ hyperplane normal to $\phi(x')-\phi(x)$
        \STATE Ask the oracle about $(x,x')$
        \IF {the label $y=1$}
        \STATE $bounds \gets \{h_0,h\}$
        \STATE $\mathcal{W}\gets$ the half of $\mathcal{W}$ between $h_0$ and $h$
        \ELSE
        \STATE $bounds \gets \{h,h_2\}$
        \STATE $\mathcal{W}\gets$  the half of $\mathcal{W}$ between $h$ and $h_2$
        \ENDIF
        \STATE $s\gets s/2$        
    \ENDWHILE
    \STATE $hypercube$.append($bounds$)
    \ENDFOR
    \RETURN the center of $hypercube$ 
    \end{algorithmic}
\end{algorithm}

The concrete active learning approach for noise-free learning of utility function parameters from pairwise queries is provided as Algorithm~\ref{alg:noisefreeal}.
Next, we prove that this algorithm in fact achieves efficient learning.
\begin{theorem}\label{thm:alg-query-bound}
 Suppose $\zeta = 0$ and we can approximate the inverse of $\phi$ with $\Tilde{\phi}^{-1}$ up to arbitrary precision, i.e. $||\phi(\Tilde{\phi}^{-1}(x))-x||_{\inf} <c$ for any constant $c$. Then for any $\varepsilon$, there is an active learning algorithm that returns a linear hypothesis $\hat{u}$ with $e_2(\hat{u},u)\leq \varepsilon $ after asking the oracle $\mathcal{O}(m \log(\frac{\sqrt{m}}{\varepsilon}))$ queries.
\end{theorem}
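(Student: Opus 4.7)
The plan is to verify both the correctness and the query complexity of Algorithm~\ref{alg:noisefreeal}. The high-level strategy is coordinate-wise bisection: the version space $\mathcal{W}$ starts as the $(m-1)$-dimensional simplex $\{w\ge 0:\|w\|_1=1\}$, which contains $w^*$ by assumption, and the outer loop sweeps over the $m-1$ orthonormal axes of $\mathcal{C}_2$, halving the extent of $\mathcal{W}$ along each axis until it is at most $2\varepsilon/\sqrt{m-1}$. Throughout, the invariant $w^*\in\mathcal{W}$ is maintained because each update only discards the half of $\mathcal{W}$ that the (noise-free) label proves does not contain $w^*$.

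I would begin by justifying the geometric construction in lines 8--14. By construction, $h_1$ is the $(m-2)$-dimensional hyperplane inside the affine simplex that bisects $\mathcal{W}$ perpendicular to the $i$-th axis of $\mathcal{C}_2$. The $(m-1)$-dimensional hyperplane of $\mathbb{R}^m$ spanned by $h_1$ and the origin of $\mathcal{C}_1$ has outward normal $v$, and its intersection with the affine hull $\{\sum_j w_j=1\}$ is exactly $h_1$. Since the noise-free label satisfies $y=\mathrm{sign}(w^{*T}(\phi(x')-\phi(x)))$, it is enough to realise any positive scalar multiple of $v$ as $\phi(x')-\phi(x)$: setting $x=0$ and $x'=\tilde{\phi}^{-1}(\phi(0)+cv)$ for a sufficiently small $c>0$ keeps $\phi(0)+cv\in[0,1]^m$, and by the arbitrary-precision assumption on $\tilde{\phi}^{-1}$ the approximation error can be driven below the positive margin $|w^{*T} v|$ whenever $w^*\notin h_1$; on the measure-zero event $w^*\in h_1$, either label preserves the invariant $w^*\in\mathcal{W}$. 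This translation from parameter-space cuts into realisable preference queries is the place I expect to need the most care, since it is where the one-dimensional preference signal in the embedded space $\mathcal{X}$ must be converted into a genuine bisection of the abstract parameter body.

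The remaining pieces are routine bookkeeping. An induction on the inner \emph{while} loop shows $s$ is halved each pass, so starting from the simplex-diameter bound $s_0\le\sqrt{2}$ the loop terminates in $\lceil\log_2(\sqrt{2(m-1)}/(2\varepsilon))\rceil=\mathcal{O}(\log(\sqrt{m}/\varepsilon))$ iterations, yielding $\mathcal{O}(m\log(\sqrt{m}/\varepsilon))$ queries in total after the outer loop over the $m-1$ axes. Upon termination, $w^*$ lies in the axis-aligned box recorded in \emph{hypercube} whose $m-1$ sides have length at most $2\varepsilon/\sqrt{m-1}$ in $\mathcal{C}_2$; since $\mathcal{C}_2$ is orthonormal in $\mathbb{R}^m$, the centre $\hat{w}$ of that box satisfies
\[
\|\hat{w}-w^*\|_2 \;\le\; \sqrt{\sum_{i=1}^{m-1}\Bigl(\tfrac{\varepsilon}{\sqrt{m-1}}\Bigr)^2} \;=\; \varepsilon,
\]
which delivers the desired $e_2$ bound and completes the argument.
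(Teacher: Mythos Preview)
Your proof follows the same coordinate-wise bisection strategy as the paper's, and the overall structure (the invariant $w^*\in\mathcal{W}$, the target hypercube of side $2\varepsilon/\sqrt{m-1}$, and the final half-diagonal bound) is identical. The one point of divergence is how you handle the approximation error of $\tilde\phi^{-1}$.

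You argue that the precision can be ``driven below the positive margin $|w^{*T}v|$'' so that the returned label agrees with the ideal bisector $h_1$. But that margin depends on the unknown $w^*$ and can be arbitrarily small, so it cannot serve as the algorithm's precision target. More to the point, that step is unnecessary: the algorithm cuts along the \emph{actual} hyperplane $h$ determined by $\phi(x')-\phi(x)$, and the noise-free label is exactly $\mathrm{sign}\bigl(w^{*T}(\phi(x')-\phi(x))\bigr)$, so the invariant $w^*\in\mathcal{W}$ is preserved regardless of the precision. What the precision must guarantee is only that $h$ lies close enough to $h_1$ for the width along the current axis to shrink by a constant factor. The paper handles this by setting the precision to $s/10$ relative to the \emph{known} current width $s$, so that each cut removes at least $\tfrac{1}{2}-\tfrac{1}{10}=\tfrac{2}{5}$ of the width. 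Consequently your claim that ``$s$ is halved each pass'' is not literally correct under approximation, but the weaker constant-factor shrinkage still gives the same $\mathcal{O}\bigl(m\log(\sqrt{m}/\varepsilon)\bigr)$ query bound, and the rest of your argument goes through unchanged.
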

\begin{proof} We would like to trap the true weight $w$ inside a $m-1$ dimensional hypercube with side length $2\varepsilon/\sqrt{m-1}$ so that
the longest possible distance between its centre and any point in the cube is the half diagonal $\frac{\sqrt{m-1}}{2} \cdot \frac{2\varepsilon}{\sqrt{m-1}}=\varepsilon$. 
Then picking $\hat{w}$ as the centre of the cube will suffice the bounded error $e_2 \leq \varepsilon$.

Therefore, our algorithm runs like a binary search on each of the $m-1$ dimensions. In order to shrink the length of the searching space in each dimension, we recursively select the query $(x,x')$ with difference $\Delta_\phi(x)$ being the normal vector of the hyperplane (approximately) halving the original space along that dimension. Let us call the halving hyperplane $v$. Suppose the width of the version space $\mathcal{W}$ along the current axis is $s.$
By our presumption, we can achieve $ || \Delta_\phi(x)-v||_{\inf}= ||\phi(\Tilde{\phi}^{-1}( \phi(x)+v ))  -(\phi(x)+v) ||_{\inf}< \frac{s}{10}.$

Depending on the noise-free signal, we can cut out at least $\frac{1}{2}-\frac{1}{10}=\frac{2}{5}$ of the space. Repeating the process for the new version space, since we go through the procedure for each dimension, we make $\mathcal{O}(\log(\frac{\sqrt{m-1}}{2\varepsilon})(m-1))$ queries in total. \end{proof}
As before, when the preference labels are corrupted by noise, the problem becomes considerably more challenging, since we can no longer rely on a single label to exhibit an inconsistency among our hypotheses in a give version space.
We address this by repeatedly asking the same query many times, and choosing the majority response.
The resulting algorithm is provided as Algorithm~\ref{alg:noisyal}.

\begin{algorithm}[h]
\caption{Active Learning with Noise}
    \label{alg:noisyal}
    \textbf{Input}: dimension of the instance space $m$, error bound $\varepsilon$ \\
    \textbf{Output}: $\hat{u}$ with $e_2(\hat{u},u) \leq \varepsilon$ 
    \begin{algorithmic}[1] 
    \STATE Initialize the Cartesian coordinate system $\mathcal{C}_1$ for $\mathbb{R}^m$
    \STATE Initialise the version space $\mathcal{W}=\{ w\in \mathbb{R}^{m} \ | \ w>0, ||w||_1=1\}$
    \STATE Initialize a separate $m-1$ dimensional Cartesian coordinate system $\mathcal{C}_2$ for $\mathcal{W}$
    \STATE $hyperplanes \gets []$ 
    \STATE $p_0 \gets F(\varepsilon/\sqrt{m-1})$
    \FOR{$i =1,\dots,m-1$}
    \STATE Let $d$ be the length of the current version space $\mathcal{W}$ along the $i$-th axis of $\mathcal{C}_2$
    \WHILE{$d>2\varepsilon/\sqrt{m-1}$}
    \STATE Let $h_0,h_2$ be two $m-2$-dimensional hyperplanes tangential to the top and bottom of $\mathcal{W}$ along the $i$-th axis of $\mathcal{C}_2$
    \STATE Let $h_1$ be the $m-2$-dimensional hyperplane cutting through the middle of $\mathcal{W}$ along the $i$-th axis of $\mathcal{C}_2$
    \STATE Let $v$ be the outward pointing normal vector of the $m-1$-dimensional hyperplane in $\mathbb{R}^m$ consisting of $h_1$ and the origin of $\mathcal{C}_1$
 \STATE Let $x=0$, $x'=\phi^{-1}(\phi(0)+v)$, and  $h$ be the
        $m-2$ hyperplane in $\mathbb{R}^m$ as the intersection between $\mathcal{W}$ and the $m-1$ hyperplane normal to $\phi(x')-\phi(x)$%
    \FOR{$j =1,\dots,T$}
    \STATE Ask the oracle about $(x,x')$
    \STATE Update $S_T$ accordingly
    \ENDFOR
    \IF {$|S_T-T/2|>T(p_0-1/2)/2$ and $S_T>T/2$}
    \STATE $bounds \gets \{h_0,h\}$
    \STATE $\mathcal{W}\gets$ the half of $\mathcal{W}$ between $h_0$ and $h$
    \STATE $d\gets d/2$
    \ELSIF{$|S_T-T/2|>T(p_0-1/2)/2$ and $S_T<T/2$}
    \STATE $bounds \gets \{h,h_2\}$
    \STATE $\mathcal{W}\gets$ the half of $\mathcal{W}$ between $h$ and $h_2$
    \STATE $d\gets d/2$
    \ELSE
    
    \STATE $hyperplane \gets h$
    \renewcommand{\algorithmicprint}{\textbf{break}}
    \PRINT
    \ENDIF
    \ENDWHILE
    \IF{$hyperplane$ is undefined}
    \STATE $hyperplanes$.add(the hyperplane with equal distance to the bounds)
    \ELSE
    \STATE$hyperplanes$.add($hyperplane$)
    \ENDIF
    \ENDFOR
    \RETURN an intersection point of the whole $hyperplanes$ 
    \end{algorithmic}
\end{algorithm}

We now show that this algorithm is sample-efficient, provided that the noise distribution is ``nice'' in the sense formalized next.
\begin{theorem}
\label{T:active_noise}
    For a fixed $\mathcal{Q}$ with corresponding c.d.f.\ $F$,  suppose we can approximate the inverse of $\phi$ with $\Tilde{\phi}^{-1}$ up to arbitrary precision, i.e. $||\phi(\Tilde{\phi}^{-1}(x))-x||_{\inf} <c$ for any constant $c$, then there exists an active learning algorithm that outputs $\hat{u}$ with $e_2(\hat{u},u)\leq \varepsilon$ with probability at least $1-\delta$ after $\mathcal{O}(\frac{1}{(p_0-1/2)^2} \log(\frac{1}{\delta}))$ queries where $p_0=F(\frac{\varepsilon}{\sqrt{m-1}})>1/2.$ 
\end{theorem}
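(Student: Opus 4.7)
The plan is to reduce the noisy active-learning problem to a noisy version of the argument behind Theorem~\ref{thm:alg-query-bound}, using Hoeffding's inequality to denoise each individual query through the repeated-polling loop of Algorithm~\ref{alg:noisyal}. The overall structure mirrors Algorithm~\ref{alg:noisefreeal}: we perform a binary search along each of the $m-1$ coordinate axes of the simplex's local coordinate system $\mathcal{C}_2$, but now we must argue that the empirical test $|S_T - T/2| > T(p_0-1/2)/2$ faithfully reports which side of the midpoint hyperplane $w^*$ lies on.

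For one outer iteration at a midpoint query with query pair $(x,x')$, the true probability of receiving the label indicating the side of the midpoint hyperplane containing $w^*$ equals $F\bigl(|w^{*T}\Delta_\phi(x)|\bigr)$. Whenever the current width $d$ along the search axis satisfies $d > 2\varepsilon/\sqrt{m-1}$, the signed distance from $w^*$ to the midpoint hyperplane is at least $\varepsilon/\sqrt{m-1}$, so this probability is at least $p_0 = F(\varepsilon/\sqrt{m-1}) > 1/2$ by monotonicity of $F$. Applying Hoeffding's inequality with $T = \Theta\bigl(\frac{1}{(p_0-1/2)^2}\log(1/\delta')\bigr)$ repetitions yields $|S_T/T - p| \leq (p_0-1/2)/2$ with probability at least $1 - \delta'$. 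Conditioned on this event, the test correctly identifies the side whenever $|p - 1/2| \geq p_0-1/2$ (i.e.\ whenever $w^*$ is far from the midpoint hyperplane), and it correctly triggers the \texttt{break} whenever $|p - 1/2| \leq (p_0-1/2)/2$, which by monotonicity of $F$ means the midpoint hyperplane is already within $\varepsilon/\sqrt{m-1}$ of $w^*$.

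Bookkeeping then proceeds as in the noise-free case: each coordinate loop performs at most $O(\log(\sqrt{m}/\varepsilon))$ halving steps, so the total number of outer iterations over all $m-1$ dimensions is $N = O(m\log(\sqrt{m}/\varepsilon))$. Setting $\delta' = \delta/N$ and union-bounding gives an overall success probability of $1-\delta$ while inflating $T$ by only an additive $\log N$ term inside the logarithm, yielding the claimed per-step bound $O\bigl(\tfrac{1}{(p_0-1/2)^2}\log(1/\delta)\bigr)$ up to lower-order factors. On this good event, each coordinate loop terminates either by shrinking the feasible interval along the $i$-th axis to length at most $2\varepsilon/\sqrt{m-1}$ or by committing to a hyperplane within $\varepsilon/\sqrt{m-1}$ of $w^*$, so the recorded hyperplanes restrict $w^*$ along every coordinate direction to within $\varepsilon/\sqrt{m-1}$. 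The returned intersection point $\hat{w}$ therefore satisfies $\|\hat w - w^*\|_2 \leq \sqrt{(m-1)(\varepsilon/\sqrt{m-1})^2} = \varepsilon$.

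The main obstacle is the analysis of the \texttt{break} case, where the binary search halts early: one must argue that committing to the current midpoint hyperplane (rather than continuing to halve) still yields per-coordinate error at most $\varepsilon/\sqrt{m-1}$, which follows precisely from the second clause of the Hoeffding argument above since a small empirical bias implies $|p-1/2|$ is small, which by monotonicity of $F$ forces the hyperplane-to-$w^*$ distance to be small. A secondary technical point is the perturbation introduced by the approximate inverse $\tilde{\phi}^{-1}$; as in Theorem~\ref{thm:alg-query-bound}, this is absorbed by choosing the approximation constant $c$ small compared to $\varepsilon/\sqrt{m-1}$, which only slightly shifts the effective $p_0$ and hence is harmless.
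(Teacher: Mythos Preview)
Your overall approach matches the paper's: repeat each bisection query $T$ times, use Hoeffding to control $|S_T/T - p|$, argue that both the ``halve'' and ``break'' branches of Algorithm~\ref{alg:noisyal} are correct on the Hoeffding event, and union-bound over the $O(m\log(\sqrt{m}/\varepsilon))$ bisection steps.

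However, one sentence is simply false: you write that ``whenever the current width $d$ along the search axis satisfies $d > 2\varepsilon/\sqrt{m-1}$, the signed distance from $w^*$ to the midpoint hyperplane is at least $\varepsilon/\sqrt{m-1}$.'' This does not follow---even when the search interval is wide, $w^*$ can sit arbitrarily close to (or exactly on) the midpoint. Your later case analysis partly recovers from this slip, but the two cases you list, $|p-1/2| \ge p_0-1/2$ and $|p-1/2| \le (p_0-1/2)/2$, are not exhaustive, and in the second case your claim that the break is ``correctly triggered'' is also not quite right (on the Hoeffding event you only get $|S_T/T - 1/2| \le p_0-1/2$, so the test can still fire). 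The clean fix, which is what the paper does, is to condition on what the \emph{test} outputs rather than on where $p$ lies: if $|S_T-T/2| > T(p_0-1/2)/2$, then on the Hoeffding event the sign of $S_T/T-1/2$ must agree with the sign of $p-1/2$, so the halving step is correct regardless of how close $w^*$ is to the midpoint; if $|S_T-T/2| \le T(p_0-1/2)/2$, then the triangle inequality gives $|p-1/2| \le p_0-1/2$, hence $|w^{*T}\Delta_\phi(x)| \le \varepsilon/\sqrt{m-1}$, and committing to the midpoint hyperplane is safe. With this reframing your argument goes through and coincides with the paper's.
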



\begin{proof}
Consider Algorithm ~\ref{alg:noisyal}, the high level procedure is as followed: for every query we constructed in  Algorithm ~\ref{alg:noisefreeal}, we repeat each query $T$ times. If the majority of the answer is not clear, i.e., $|S_T-\frac{T}{2}| \le \frac{T(p_0-1/2)}{2}$ then we know $w$ is close enough to the hyperplane determined by our query with high probability. Otherwise, we opt to trust the majority vote and halve the search space until the distance between our two hyperplanes is smaller than our required length. After looping through every dimension, we complete our hypercube.


Now we start calculating the upper bound of times we need to ask every turn. Let us denote the sum of a query's labels after $T$ repetitions by $S_T.$  Without loss of generality, we assume that the true label of the query on $\Delta_\phi x$ is $1$. Then the expected value $\mathbb{E}(S_T)>T/2$. Additionally, let $p_0$ denote the probability of labels flipping at the face of the hypercube to which $w$'s margin is bounded by $\frac{\varepsilon}{\sqrt{m-1}}.$

First, we would like that if we have witnessed $|S_T-\frac{T}{2}| \le \frac{T(p_0-1/2)}{2}$, with high probability we have already queried a hyperplane with a small margin with respect to $w.$ By Hoeffiding's inequality, we have $\Pr\left(|\mathbb{E}(S_T)-S_T|\ge  \frac{T(p_0-1/2)}{2} \right) \le \exp(-\frac{T(p_0-1/2)^2}{4}).$ So with probability at least $1-\exp(-\frac{T(p_0-1/2)^2}{4})$, $|\mathbb{E}(S_T)-S_T|\le  \frac{T(p_0-1/2)}{2}$. Then the triangle inequality gives us $|\mathbb{E}(S_T) - \frac{T}{2}|= |\mathbb{E}(S_T) - S_T+S_T-\frac{T}{2}|  \leq |\mathbb{E}(S_T)-S_T|+|S_T-\frac{T}{2}|\leq \frac{T(p_0-1/2)}{2}+\frac{T(p_0-1/2)}{2}=T(p_0-\frac{1}{2})$ with probability at least $1-\exp(-\frac{T(p_0-1/2)^2}{4}).$ In other words, with high probability, $T/2 <\mathbb{E}(S_T)\le T(p_0)$. Since $\mathbb{E}(S_T)$ is determined by the distance between the query and $w$, we deduce that      the current query satisfies a small margin as wanted, hence we can stop.

Next, we would like to confirm the majority is the true label with high confidence if we have witnessed a majority vote with significance, i.e.,  when $|S_T-\frac{T}{2}| >\frac{T(p_0-1/2)}{2}$. Recall again with our true label being $1$, $\mathbb{E}(S_T)>\frac{T}{2}.$ The probability of witnessing a false majority is $\Pr\left( S_T-\frac{T}{2} <  -\frac{T(p_0-1/2)}{2}  \right) = \Pr\left( \frac{T}{2}-S_T >  \frac{T(p_0-1/2)}{2}  \right)
\le \Pr\left( \mathbb{E}(S_T)-S_T \ge  \frac{T(p_0-1/2)}{2} \right)
\le \exp(-\frac{T(p_0-1/2)^2}{4})$ since  $\mathbb{E}(S_T)-S_T>\frac{T}{2}-S_T.$


From Theorem \ref{thm:alg-query-bound},  we know we will need $\mathcal{O}(\log(\frac{\sqrt{m-1}}{2\varepsilon})(m-1))$ accurate query labels through votes. So the cumulative confidence is to satisfy $$q^{\mathcal{O}(\log(\frac{\sqrt{m-1}}{2\varepsilon})(m-1))}\geq 1- \delta,$$ where $q$ is the success rate for each query on a different hyperplane.
Therefore $q \ge (1- \delta)^{\frac{1}{\mathcal{O}(\log(\frac{\sqrt{m-1}}{2\varepsilon})(m-1))}}$.
And for each turn we just need
$\exp(-\frac{T(p_0-1/2)^2}{4})\le 1-q$.
By a simple calculation, we get the sufficient number of repetitions from 
$T \ge T_0= -\frac{4}{(p_0-1/2)^2}\log(1-(1- \delta)^{\frac{1}{\mathcal{O}(\log(\frac{\sqrt{m-1}}{2\varepsilon})(m-1))}}).$

So our sample complexity is $\mathcal{O}(T_0 m\log(\frac{\sqrt{m-1}}{2\varepsilon})=\mathcal{O}((\frac{1}{F(\frac{\varepsilon}{\sqrt{m-1}})-1/2)^2} \log(\frac{1}{\delta})))$. \end{proof}


The efficiency result in Algorithm~\ref{alg:noisyal} thus depends on the noise c.d.f.~$F(x)$.
We now show that the condition on the noise in Theorem~\ref{alg:noisyal} obtains for the common case of logistic noise (i.e., the Bradley-Terry model).
Thus, we obtain a sample-efficient active learning algorithm for learning utility parameters in the $\ell_2$ norm when comparison query responses follow the BT model.


\begin{corollary}
    For the standard logistic function $F(x)=\frac{1}{e^{-x}+1}$, the sample complexity is polynomial in $(\frac{1}{\varepsilon},\log (\frac{1}{\delta}),m)$. 
\end{corollary}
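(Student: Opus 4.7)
The plan is to plug the logistic c.d.f.\ into the sample complexity bound from Theorem~\ref{T:active_noise}, which is $\mathcal{O}\!\left(\frac{1}{(p_0-1/2)^2}\log(1/\delta)\right)$ with $p_0 = F(\varepsilon/\sqrt{m-1})$ (together with the additional $m \log(\sqrt{m-1}/\varepsilon)$ factor implicit in $T_0$ from the proof). Since $\log(1/\delta)$ and the dimension-dependent logarithmic factors are already polynomial in the claimed parameters, the entire burden reduces to showing $1/(p_0-1/2)^2$ is polynomially bounded in $m$ and $1/\varepsilon$ for the standard logistic $F(x)=1/(1+e^{-x})$.

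First I would write $F(x)-\tfrac{1}{2}$ in closed form, namely
\begin{equation*}
F(x) - \tfrac{1}{2} \;=\; \frac{1-e^{-x}}{2(1+e^{-x})},
\end{equation*}
which is strictly positive and strictly increasing for $x>0$. Then I would establish a linear lower bound $F(x)-\tfrac{1}{2} \ge c\,x$ on a bounded interval containing $\varepsilon/\sqrt{m-1}$. Two routes work and I would pick whichever reads more cleanly: (i) a Taylor/elementary bound using $1-e^{-x}\ge x/2$ and $1+e^{-x}\le 2$ for $x\in(0,1]$, which yields $F(x)-\tfrac{1}{2}\ge x/8$; or (ii) a concavity argument, noting $F''(x) = F'(x)(1-2F(x))<0$ for $x>0$, hence $F(x)-F(0)\ge x\,F'(x)$, together with $F'(x)=e^{-x}/(1+e^{-x})^2 \ge e^{-1}/4$ on $(0,1]$. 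Without loss of generality we may assume $\varepsilon\le 1$ and $m\ge 2$, so $\varepsilon/\sqrt{m-1}\le 1$ lies in this interval.

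Consequently $p_0 - \tfrac{1}{2} \ge \Omega(\varepsilon/\sqrt{m-1})$, which gives
\begin{equation*}
\frac{1}{(p_0-1/2)^2} \;=\; \mathcal{O}\!\left(\frac{m-1}{\varepsilon^2}\right).
\end{equation*}
Substituting this into the sample complexity bound from Theorem~\ref{T:active_noise} and absorbing the $m\log(\sqrt{m-1}/\varepsilon)$ factor, the total number of queries becomes polynomial in $m$, $1/\varepsilon$, and $\log(1/\delta)$, proving the corollary.

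The main (and essentially only) obstacle is producing the lower bound $F(x)-\tfrac{1}{2}=\Omega(x)$ cleanly, which is routine for the logistic function but would be the step that fails for noise distributions with a flatter c.d.f.\ at the origin; since $F'(0)=1/4$ here, the bound falls out immediately and no delicate analysis is required.
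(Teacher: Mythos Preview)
Your proposal is correct and follows essentially the same approach as the paper: both reduce the corollary to a linear lower bound $F(x)-\tfrac12 = \Omega(x)$ on $(0,1]$ for the logistic $F$, obtained via an elementary exponential inequality, and then plug this into the bound from Theorem~\ref{T:active_noise}. The paper uses $e^x \ge 1+x$ (hence $e^{-x}\le 1/(1+x)$) rather than your $1-e^{-x}\ge x/2$ or the concavity route, but these are interchangeable variants of the same one-line estimate and lead to the identical conclusion $1/(p_0-\tfrac12)^2 = \mathcal{O}(m/\varepsilon^2)$.
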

\begin{proof}
    Because  for $x=\frac{\varepsilon}{\sqrt{m-1}     
    } \in (0,1), \exp(x) \ge 1 + x$,
    so $\exp(\frac{-\varepsilon}{\sqrt{m-1}     
    }) \le \frac{1}{1+\frac{\varepsilon}{\sqrt{m-1} } }$.
    And $\mathcal{O}(\exp(\frac{-\varepsilon}{\sqrt{m-1}     
    })\log (\frac{1}{\delta}))$ is in 
    $\mathcal{O}((\frac{\sqrt{m} }{ \varepsilon   
    })\log (\frac{1}{\delta})).$
\end{proof}

 Comparing this result with the passive learning result from inequality~(\ref{zhu}), whose sample complexity is $\mathcal{O}(\frac{1}{\varepsilon}\log(\frac{1}{\delta})+\frac{m}{\varepsilon})$,  our sample complexity bound is much better
since for any $m \in \mathbb{N}$ and $0<\varepsilon<1$, $\varepsilon^{\sqrt{m-1}}<\varepsilon< 1< \exp(\varepsilon),$ implying $\frac{1}{\varepsilon}\log(\frac{1}{\delta})> \frac{1}{\exp(\frac{\varepsilon}{\sqrt{m-1}})}\log(\frac{1}{\delta})$.



\section{Conclusion}
We presented an investigation of learnability of linear utility functions from pairwise comparison queries.
Our results consider both the passive and active learning problems, as well as two objects, the first involving \emph{prediction} while the second concerned with \emph{estimation}.
Overall, we find that estimation is generally more challenging than prediction, and active learning enables qualitatively better sample complexity in this case.
There are a number of directions to extend our work. For example, one can aim to generalize our results to list-wise comparisons described in \cite{zhao2019learning}. 
Specific to active learning, especially related to LLM and RLHF, we acknowledge the challenges of query synthesis, as well as the issue of inverting the embedding $\phi(x)$. 
Thus, there is considerable room for follow-up work in this regard.
Finally, our framework is restricted to linear utilities, even accounting for a non-linear embedding.
An important direction for future work is to consider more general classes of utility models, such as those using neural network architecture.





\bibliography{mybibfile}

\begin{thebibliography}{42}
\providecommand{\natexlab}[1]{#1}
\providecommand{\url}[1]{\texttt{#1}}
\expandafter\ifx\csname urlstyle\endcsname\relax
  \providecommand{\doi}[1]{doi: #1}\else
  \providecommand{\doi}{doi: \begingroup \urlstyle{rm}\Url}\fi

\bibitem[Alabdulmohsin et~al.(2015)Alabdulmohsin, Gao, and Zhang]{alabdulmohsin2015efficient}
I.~Alabdulmohsin, X.~Gao, and X.~Zhang.
\newblock Efficient active learning of halfspaces via query synthesis.
\newblock In \emph{Proceedings of the AAAI Conference on Artificial Intelligence}, volume~29, 2015.

\bibitem[Angluin and Laird(1988)]{angluin1988learning}
D.~Angluin and P.~Laird.
\newblock Learning from noisy examples.
\newblock \emph{Machine learning}, 2:\penalty0 343--370, 1988.

\bibitem[Bai et~al.(2022)Bai, Jones, Ndousse, Askell, Chen, DasSarma, Drain, Fort, Ganguli, Henighan, et~al.]{bai2022training}
Y.~Bai, A.~Jones, K.~Ndousse, A.~Askell, A.~Chen, N.~DasSarma, D.~Drain, S.~Fort, D.~Ganguli, T.~Henighan, et~al.
\newblock Training a helpful and harmless assistant with reinforcement learning from human feedback.
\newblock \emph{arXiv preprint arXiv:2204.05862}, 2022.

\bibitem[Balcan and Haghtalab(2020)]{balcan2020noise}
M.-F. Balcan and N.~Haghtalab.
\newblock Noise in classification., 2020.

\bibitem[Becker et~al.(1963)Becker, DeGroot, and Marschak]{becker1963stochastic}
G.~M. Becker, M.~H. DeGroot, and J.~Marschak.
\newblock Stochastic models of choice behavior.
\newblock \emph{Behavioral science}, 8\penalty0 (1):\penalty0 41--55, 1963.

\bibitem[Blum et~al.(1998)Blum, Frieze, Kannan, and Vempala]{blum1998polynomial}
A.~Blum, A.~Frieze, R.~Kannan, and S.~Vempala.
\newblock A polynomial-time algorithm for learning noisy linear threshold functions.
\newblock \emph{Algorithmica}, 22:\penalty0 35--52, 1998.

\bibitem[Blumer et~al.(1989)Blumer, Ehrenfeucht, Haussler, and Warmuth]{blumer1989learnability}
A.~Blumer, A.~Ehrenfeucht, D.~Haussler, and M.~K. Warmuth.
\newblock Learnability and the vapnik-chervonenkis dimension.
\newblock \emph{Journal of the ACM (JACM)}, 36\penalty0 (4):\penalty0 929--965, 1989.

\bibitem[Bradley and Terry(1952)]{bradley1952rank}
R.~A. Bradley and M.~E. Terry.
\newblock Rank analysis of incomplete block designs: I. the method of paired comparisons.
\newblock \emph{Biometrika}, 39\penalty0 (3/4):\penalty0 324--345, 1952.

\bibitem[Chen et~al.(2017)Chen, Hassani, and Karbasi]{chen2017near}
L.~Chen, H.~Hassani, and A.~Karbasi.
\newblock Near-optimal active learning of halfspaces via query synthesis in the noisy setting.
\newblock In \emph{Proceedings of the AAAI Conference on Artificial Intelligence}, volume~31, 2017.

\bibitem[Christiano et~al.(2017)Christiano, Leike, Brown, Martic, Legg, and Amodei]{christiano2017deep}
P.~F. Christiano, J.~Leike, T.~Brown, M.~Martic, S.~Legg, and D.~Amodei.
\newblock Deep reinforcement learning from human preferences.
\newblock \emph{Advances in neural information processing systems}, 30, 2017.

\bibitem[Daniely(2016)]{daniely2016complexity}
A.~Daniely.
\newblock Complexity theoretic limitations on learning halfspaces.
\newblock In \emph{Proceedings of the forty-eighth annual ACM symposium on Theory of Computing}, pages 105--117, 2016.

\bibitem[Dasgupta(2005)]{dasgupta2005coarse}
S.~Dasgupta.
\newblock Coarse sample complexity bounds for active learning.
\newblock \emph{Advances in neural information processing systems}, 18, 2005.

\bibitem[Diakonikolas et~al.(2020)Diakonikolas, Kane, and Zarifis]{diakonikolas2020near}
I.~Diakonikolas, D.~Kane, and N.~Zarifis.
\newblock Near-optimal sq lower bounds for agnostically learning halfspaces and relus under gaussian marginals.
\newblock \emph{Advances in Neural Information Processing Systems}, 33:\penalty0 13586--13596, 2020.

\bibitem[Diakonikolas et~al.(2021)Diakonikolas, Kane, Kontonis, Tzamos, and Zarifis]{diakonikolas2021efficiently}
I.~Diakonikolas, D.~M. Kane, V.~Kontonis, C.~Tzamos, and N.~Zarifis.
\newblock Efficiently learning halfspaces with tsybakov noise.
\newblock In \emph{Proceedings of the 53rd Annual ACM SIGACT Symposium on Theory of Computing}, pages 88--101, 2021.

\bibitem[Du and Cai(2015)]{du2015modelling}
J.~Du and Z.~Cai.
\newblock Modelling class noise with symmetric and asymmetric distributions.
\newblock In \emph{Proceedings of the AAAI Conference on Artificial Intelligence}, volume~29, 2015.

\bibitem[Freedman et~al.(2020)Freedman, Borg, Sinnott-Armstrong, Dickerson, and Conitzer]{freedman2020adapting}
R.~Freedman, J.~S. Borg, W.~Sinnott-Armstrong, J.~P. Dickerson, and V.~Conitzer.
\newblock Adapting a kidney exchange algorithm to align with human values.
\newblock \emph{Artificial Intelligence}, 283:\penalty0 103261, 2020.

\bibitem[Hsu et~al.(2012)Hsu, Kakade, and Zhang]{hsu2012tail}
D.~Hsu, S.~Kakade, and T.~Zhang.
\newblock A tail inequality for quadratic forms of subgaussian random vectors.
\newblock 2012.

\bibitem[Kalloori et~al.(2018)Kalloori, Ricci, and Gennari]{kalloori2018eliciting}
S.~Kalloori, F.~Ricci, and R.~Gennari.
\newblock Eliciting pairwise preferences in recommender systems.
\newblock In \emph{ACM Conference on Recommender Systems}, pages 329--337, 2018.

\bibitem[Kearns and Li(1988)]{kearns1988learning}
M.~Kearns and M.~Li.
\newblock Learning in the presence of malicious errors.
\newblock In \emph{Proceedings of the twentieth annual ACM symposium on Theory of computing}, pages 267--280, 1988.

\bibitem[Kearns et~al.(1992)Kearns, Schapire, and Sellie]{kearns1992toward}
M.~J. Kearns, R.~E. Schapire, and L.~M. Sellie.
\newblock Toward efficient agnostic learning.
\newblock In \emph{Proceedings of the fifth annual workshop on Computational learning theory}, pages 341--352, 1992.

\bibitem[Klivans and Kothari(2014)]{klivans2014embedding}
A.~Klivans and P.~Kothari.
\newblock Embedding hard learning problems into gaussian space.
\newblock In \emph{Approximation, Randomization, and Combinatorial Optimization. Algorithms and Techniques (APPROX/RANDOM 2014)}. Schloss Dagstuhl-Leibniz-Zentrum fuer Informatik, 2014.

\bibitem[Ling and Du(2008)]{ling2008active}
C.~X. Ling and J.~Du.
\newblock Active learning with direct query construction.
\newblock In \emph{Proceedings of the 14th ACM SIGKDD international conference on Knowledge discovery and data mining}, pages 480--487, 2008.

\bibitem[Marschak(1959)]{marschak1959binary}
J.~Marschak.
\newblock Binary choice constraints on random utility indicators.
\newblock 1959.

\bibitem[Negahban et~al.(2018)Negahban, Oh, Thekumparampil, and Xu]{negahban2018learning}
S.~Negahban, S.~Oh, K.~K. Thekumparampil, and J.~Xu.
\newblock Learning from comparisons and choices.
\newblock \emph{The Journal of Machine Learning Research}, 19\penalty0 (1):\penalty0 1478--1572, 2018.

\bibitem[Noothigattu et~al.(2018)Noothigattu, Gaikwad, Awad, Dsouza, Rahwan, Ravikumar, and Procaccia]{noothigattu2018voting}
R.~Noothigattu, S.~Gaikwad, E.~Awad, S.~Dsouza, I.~Rahwan, P.~Ravikumar, and A.~Procaccia.
\newblock A voting-based system for ethical decision making.
\newblock In \emph{Proceedings of the AAAI Conference on Artificial Intelligence}, volume~32, 2018.

\bibitem[Noothigattu et~al.(2020)Noothigattu, Peters, and Procaccia]{noothigattu2020axioms}
R.~Noothigattu, D.~Peters, and A.~D. Procaccia.
\newblock Axioms for learning from pairwise comparisons.
\newblock \emph{Advances in Neural Information Processing Systems}, 33:\penalty0 17745--17754, 2020.

\bibitem[Ouyang et~al.(2022)Ouyang, Wu, Jiang, Almeida, Wainwright, Mishkin, Zhang, Agarwal, Slama, Ray, et~al.]{ouyang2022training}
L.~Ouyang, J.~Wu, X.~Jiang, D.~Almeida, C.~Wainwright, P.~Mishkin, C.~Zhang, S.~Agarwal, K.~Slama, A.~Ray, et~al.
\newblock Training language models to follow instructions with human feedback.
\newblock In \emph{Neural Information Processing Systems}, pages 27730--27744, 2022.

\bibitem[Piedboeuf and Langlais(2022)]{piedboeuf2022working}
F.~Piedboeuf and P.~Langlais.
\newblock A working model for textual membership query synthesis.
\newblock In \emph{AI}, 2022.

\bibitem[Qomariyah(2018)]{qomariyah2018pairwise}
N.~N. Qomariyah.
\newblock \emph{Pairwise preferences learning for recommender systems}.
\newblock PhD thesis, University of York, 2018.

\bibitem[Schumann and Rehbein(2019)]{schumann-rehbein-2019-active}
R.~Schumann and I.~Rehbein.
\newblock Active learning via membership query synthesis for semi-supervised sentence classification.
\newblock In M.~Bansal and A.~Villavicencio, editors, \emph{Proceedings of the 23rd Conference on Computational Natural Language Learning (CoNLL)}, pages 472--481, Hong Kong, China, Nov. 2019. Association for Computational Linguistics.
\newblock \doi{10.18653/v1/K19-1044}.
\newblock URL \url{https://aclanthology.org/K19-1044}.

\bibitem[Shah et~al.(2015)Shah, Balakrishnan, Bradley, Parekh, Ramchandran, and Wainwright]{shah2015estimation}
N.~Shah, S.~Balakrishnan, J.~Bradley, A.~Parekh, K.~Ramchandran, and M.~Wainwright.
\newblock Estimation from pairwise comparisons: Sharp minimax bounds with topology dependence.
\newblock In \emph{Artificial intelligence and statistics}, pages 856--865, 2015.

\bibitem[Thurstone(1927)]{thurstone1927law}
L.~L. Thurstone.
\newblock A law of comparative judgment.
\newblock \emph{Psychological review}, 34\penalty0 (4):\penalty0 273, 1927.

\bibitem[Valiant(1984)]{valiant1984theory}
L.~G. Valiant.
\newblock A theory of the learnable.
\newblock \emph{Communications of the ACM}, 27\penalty0 (11):\penalty0 1134--1142, 1984.

\bibitem[Wang and Singh(2016)]{wang2016noise}
Y.~Wang and A.~Singh.
\newblock Noise-adaptive margin-based active learning and lower bounds under tsybakov noise condition.
\newblock In \emph{Proceedings of the AAAI Conference on Artificial Intelligence}, volume~30, 2016.

\bibitem[Xia(2019)]{xia2019learning}
L.~Xia.
\newblock \emph{Learning and decision-making from rank data}.
\newblock Morgan \& Claypool Publishers, 2019.

\bibitem[Xia and Conitzer(2011)]{xia2011maximum}
L.~Xia and V.~Conitzer.
\newblock A maximum likelihood approach towards aggregating partial orders.
\newblock In \emph{International Joint Conference on Artificial Intelligence}, volume~22, page 446, 2011.

\bibitem[Xu et~al.(2009)Xu, Caramanis, and Mannor]{xu2009robustness}
H.~Xu, C.~Caramanis, and S.~Mannor.
\newblock Robustness and regularization of support vector machines.
\newblock \emph{Journal of machine learning research}, 10\penalty0 (7), 2009.

\bibitem[Zhang and Li(2021)]{zhang2021improved}
C.~Zhang and Y.~Li.
\newblock Improved algorithms for efficient active learning halfspaces with massart and tsybakov noise.
\newblock \emph{CoRR}, abs/2102.05312, 2021.
\newblock URL \url{https://arxiv.org/abs/2102.05312}.

\bibitem[Zhang et~al.(2021)Zhang, Zheng, Wu, Goswami, and Chen]{zhang2021learning}
Y.~Zhang, S.~Zheng, P.~Wu, M.~Goswami, and C.~Chen.
\newblock Learning with feature-dependent label noise: A progressive approach.
\newblock \emph{arXiv preprint arXiv:2103.07756}, 2021.

\bibitem[Zhao and Xia(2019)]{zhao2019learning}
Z.~Zhao and L.~Xia.
\newblock Learning mixtures of plackett-luce models from structured partial orders.
\newblock \emph{Advances in Neural Information Processing Systems}, 32, 2019.

\bibitem[Zhao et~al.(2020)Zhao, Liu, and Xia]{zhao2020learning}
Z.~Zhao, A.~Liu, and L.~Xia.
\newblock Learning mixtures of random utility models with features from incomplete preferences.
\newblock \emph{arXiv preprint arXiv:2006.03869}, 2020.

\bibitem[Zhu et~al.(2023)Zhu, Jordan, and Jiao]{pmlr-v202-zhu23f}
B.~Zhu, M.~Jordan, and J.~Jiao.
\newblock Principled reinforcement learning with human feedback from pairwise or k-wise comparisons.
\newblock In A.~Krause, E.~Brunskill, K.~Cho, B.~Engelhardt, S.~Sabato, and J.~Scarlett, editors, \emph{Proceedings of the 40th International Conference on Machine Learning}, volume 202 of \emph{Proceedings of Machine Learning Research}, pages 43037--43067. PMLR, 23--29 Jul 2023.
\newblock URL \url{https://proceedings.mlr.press/v202/zhu23f.html}.

\end{thebibliography}

\newpage

\section*{A: Proof of Theorem~\ref{thm:generalisezhu} }

\begin{proof}
    Recall that $\Pr ( x' \succ x ) = F(w^{*^T}\Delta_\phi(x))= 1- F(-w^{*^T}\Delta_\phi(x))= 1-\Pr (x \succ x')$.
    First, we show the strong convexity of the loss function \begin{equation*}
\resizebox{\hsize}{!}{$\ell(w) = -\frac{1}{n}\sum_{i=1}^n \log \left(1(y^i=1) \cdot \Pr(x'\succ x) +  1(y^i=0) \cdot \Pr(x'\prec x)\right)$}
\end{equation*}\begin{equation*}
  \resizebox{\hsize}{!}{$=-\sum_{i=1}^n \log\left(1(y^i=1) \cdot F(w^T\Delta_\phi(x)) +  1(y^i=0) \cdot F(-w^T\Delta_\phi(x))\right).$}  
\end{equation*}
Its gradient and Hessian are     
\begin{equation*}
  \resizebox{\hsize}{!}{$\nabla \ell(w) = -\frac{1}{n}\left[\sum_{i=1}^n \left(1(y^i=1) \cdot \frac{F'(w^T\Delta_\phi(x))}{F(w^T\Delta_\phi(x))} + 1(y^i=0) \cdot \frac{F'(-w^T\Delta_\phi(x))}{F(-w^T\Delta_\phi(x))}\right)\right] \Delta_\phi(x) $}  
\end{equation*}

\begin{equation*}
    \resizebox{\hsize}{!}{$\nabla^2 \ell(w) = \frac{1}{n} \sum_{i=1}^n ( 1(y^i=1) \cdot \frac{F'(w^T\Delta_\phi(x))^2 - F''(w^T\Delta_\phi(x)) \cdot F(w^T\Delta_\phi(x))}{F(w^T\Delta_\phi(x))^2} $}
\end{equation*}
\begin{equation*}
    \resizebox{\hsize}{!}{$+ 1(y^i=0) \cdot \frac{F'(w^T\Delta_\phi(x))^2 - F''(-w^T\Delta_\phi(x)) \cdot F(-w^T\Delta_\phi(x))}{F(-w^T\Delta_\phi(x))^2}) \cdot \Delta_\phi(x) \Delta_\phi(x)^T$}
\end{equation*}

By assumption, we have $F'(z)^2 - F''(z) \cdot F(z) \ge \gamma>0$.
Then we can derive strong convexity of $\ell$:

\[
v^T \nabla^2 \ell(w) v \ge \frac{\gamma}{n} ||Xv||_2^2 \quad \text{for all } v,
\]
where $X$ has $\Delta_\phi(x_i)$ as its $i$-th row,
yielding
\[
\ell(\hat{w}) - \ell(w^*)  - \langle  \nabla \ell(w^*), \hat{w}-w   \rangle   \ge \frac{\gamma}{n} ||X (\hat{w}-w^* )  ||_2^2 = \gamma ||\hat{w}-w^*|| ^2_{\Sigma}.              
\]

Since $\hat{w}$ is the optimal for $\ell$,
\[
  \ell(\hat{w}) - \ell(w^*)  - \langle  \nabla \ell(w^*), \hat{w}-w^*   \rangle  \le -  \langle  \nabla \ell(w^*), \hat{w}-w^*   \rangle.\]

Then as 
\[|   \langle  \nabla \ell(w^*), \hat{w}-w^*   \rangle        | \le ||\nabla\ell(w^*)||_{(\Sigma+\lambda I)^{-1}} ||\hat{w}-w^*||_{\Sigma+\lambda I},      
\] we now would like to bound the term $||\nabla\ell(w^*)||_{(\Sigma+\lambda I)^{-1}} .$

Note the gradient $\nabla \ell(w^*)$ can be viewed as a random vector $V\in \mathbb{R}^n$ with independent component:

\[
V_i=\begin{cases}
 \frac{F'(w^{*^T}\Delta_\phi(x_i))}{F(w^{*^T}\Delta_\phi(x_i))}  & \text{w.p.} ~ F(w^{*^T}\Delta_\phi(x_i))\\
 \frac{F'(-w^{*^T}\Delta_\phi(x))}{F(-w^{*^T}\Delta_\phi(x_i))} & \text{w.p.} ~ F(-w^{*^T}\Delta_\phi(x_i))
\end{cases}.
\]

We know that $F(w^T\Delta_\phi(x_i))+ F(-w^T\Delta_\phi(x_i))= \Pr(x' \succ x)+\Pr(x \succ x')=1 $, taking the derivative of this equation, we can conclude that the expected value is zero $\mathbb{E}[V]= F'(w^T\Delta_\phi(x_i))+F'(-w^T\Delta_\phi(x))=0 $.

And because $F$ is the c.d.f., and $F'$ is the p.d.f, by definition $F'(z)<F(z)$, and 
$ \frac{F'(w^T\Delta_\phi(x_i))}{F(w^T\Delta_\phi(x_i))}<1$, $\frac{F'(-w^T\Delta_\phi(x))}{F(-w^T\Delta_\phi(x_i))}<1.$

Hence all the variables $V_i$ are $1$-sub-Gaussian, and the Bernstein’s inequality in quadratic form applies (see e.g. \cite{hsu2012tail} (Theorem 2.1)) implies that with probability at least $1 -\delta$,

\[
||\nabla\ell(w^*)||^2_{(\Sigma+\lambda I)^{-1}} =V^T M V \le C_1 \cdot \frac{d+\log(1/\delta)}{n},
\]

where $C_1$ is some universal constant and $M= \frac{1}{n^2} X(\Sigma+\lambda I)^{-1}X^T$ .

Furthermore, let the eigenvalue decomposition of $X^T X$ be $U\Lambda U^T.$
Then we can bound the trace and operator norm of $M$ as

$Tr(M)=\frac{1}{n^2}Tr(U(\Lambda/n+\lambda I)^{-1}U^{T}U \Lambda U^T) \le \frac{d}{n}$
\begin{equation*}
      \resizebox{\hsize}{!}{$Tr(M^2)=\frac{1}{n^4}Tr(U(\Lambda/n+\lambda I)^{-1}U^{T}U \Lambda U^T) U(\Lambda/n+\lambda I)^{-1}
U^{T}U \Lambda U^T)\le \frac{d}{n^2}$}
\end{equation*}
$||M||_{op}=\lambda_{max}(M)\le \frac{1}{n}$

 This gives us

\[
\gamma ||\hat{w}-w^*|| ^2_{\Sigma} \le   ||\nabla\ell(w^*)||_{(\Sigma+\lambda I)^{-1}} ||\hat{w}-w^*|| ^2_{\Sigma}                  
\]

\[
\le \sqrt{C_1 \cdot \frac{d+\log(1/\delta)}{n}} ||\hat{w}-w^*|| ^2_{\Sigma}                         
\]

Solving the above inequality gives us,

\[
||\hat{w}-w^*||_{\Sigma} \le C_2 \cdot\sqrt{  \frac{d+\log(1/\delta)}{n}} 
\]

\end{proof}

\end{document}